\definecolor{Gray}{gray}{0.85}
\newcolumntype{g}{>{\columncolor{Gray}}c}
\newtheorem*{rep@theorem}{\rep@title}
\newcommand{\newreptheorem}[2]{%
\newenvironment{rep#1}[1]{%
 \def\rep@title{#2 \ref{##1}}%
 \begin{rep@theorem}}%
 {\end{rep@theorem}}}
\def\Rset{\mathbb{R}}
\DeclareMathOperator*{\E}{\mathbb{E}}
\DeclareMathOperator*{\argmin}{\rm argmin}
\newcommand{\set}[1]{\{#1\}}
\newcommand{\h}{\widehat}
\newcommand{\cD}{\mathcal{D}}
\newcommand{\cL}{\mathcal{L}}
\newcommand{\cX}{\mathcal{X}}
\newcommand{\cY}{\mathcal{Y}}
\newcommand{\e}{\epsilon}
\newcommand{\wt}{\widetilde}
\newcommand{\sD}{{\mathscr D}}
\newcommand{\sfD}{{\mathsf D}}
\newcommand{\sfd}{{\mathsf d}}
\newcommand{\msD}{{\mathscr D}^{1}} %marginal sD
\newcommand{\msU}{{\mathscr U}^{1}} %marginal sU
\newcommand{\scrD}{{\mathscr D}}
\newcommand{\scrU}{{\mathscr U}}
\newtheorem{theorem}{Theorem}
\newtheorem{lemma}[theorem]{Lemma}
\newtheorem{corollary}[theorem]{Corollary}
\newtheorem{proposition}[theorem]{Proposition}
\newcommand{\ignore}[1]{}
\newcolumntype{P}[1]{>{\centering\arraybackslash}p{#1}}
\title{Algorithms and Theory \\ for Multiple-Source Adaptation}
\author{
Judy Hoffman \\
  CS Department Stanford University\\
  Stanford,  CA 94305 \\
  \texttt{jhoffman@cs.stanford.edu} \\
   \And
  Mehryar Mohri \\
  Courant Institute and Google \\
  New York, NY 10012 \\
  \texttt{mohri@cims.nyu.edu} \\
   \AND
   Ningshan Zhang \\
   New York University \\
   New York, NY 10012 \\
   \texttt{nzhang@stern.nyu.edu} \\
}
\begin{document}
% \nipsfinalcopy is no longer used

\maketitle

\begin{abstract}
This work includes a number of novel contributions for the multiple-source adaptation problem. We present new normalized solutions with strong theoretical guarantees for the cross-entropy loss and other similar losses. We also provide new guarantees that hold in the case where the conditional probabilities for the source domains are distinct. Moreover, we give new algorithms for determining the distribution-weighted combination solution for the cross-entropy loss and other losses. We report the results of a series of experiments with real-world datasets. We find that our algorithm outperforms competing approaches by producing a single robust model that performs well on  any target mixture distribution. Altogether, our theory, algorithms, and empirical results provide a full solution for the multiple-source adaptation problem with very practical benefits.
\end{abstract}

\section{Introduction}
\label{sec:intro}

In many modern applications, often the learner has access to
information about several source domains, including accurate
predictors possibly trained and made available by others, but no
direct information about a target domain for which one wishes to
achieve a good performance. The target domain can typically be viewed
as a combination of the source domains, that is a mixture of their
joint distributions, or it may be close to such mixtures.
In addition, often the learner does not have access to all source data
simultaneously, for legitimate reasons such as privacy, storage limitation, etc. 
Thus the learner cannot simply pool all source data together to learn a predictor.

Such problems arise commonly in speech recognition where different
groups of speakers (domains) yield different acoustic models and the
problem is to derive an accurate acoustic model for a broader
population that may be viewed as a mixture of the source groups
\citep{liao_icassp13}.  In object recognition, multiple image
databases exist, each with its own bias and labeled categories
\citep{efros_cvpr11}, but the target application may contain images
which most closely resemble only a subset of the available training
data.  Finally, in sentiment analysis, accurate predictors may be
available for sub-domains such as TVs, laptops and CD players, each
previously trained on labeled data, but no labeled data or predictor
may be at the learner's disposal for the more general category of
electronics, which can be modeled as a mixture of the sub-domains
\citep{blitzer_acl07, dredze_nips08}.

The problem of transfer from a single source to a known target domain,
either through unsupervised adaptation techniques \citep{gong_cvpr12,
  long_icml15, ganin_icml15, tzeng_iccv15}, or via lightly supervised
ones (some amount of labeled data from the target domain)
\citep{saenko_eccv10, yang_acmm07, hoffman_iclr13, rcnn}, has been
extensively investigated in the past.  Here, we focus on the problem of
multiple-source domain adaptation and ask how the learner can combine
relatively accurate predictors available for each source domain to
derive an accurate predictor for \emph{any} new mixture target domain?
This is known as the \emph{multiple-source adaption (MSA) problem}   first
formalized and analyzed theoretically by
\cite{MansourMohriRostamizadeh2008,MansourMohriRostamizadeh2009} and
later studied for various applications such as object recognition
\citep{hoffman_eccv12, gong_iccv13, gong_nips13}.
Recently, \cite{zhang2015multi} studied a causal formulation of this problem
and analyzed the same combination rules of \cite{MansourMohriRostamizadeh2008,MansourMohriRostamizadeh2009} 
for classification scenario.
A closely related problem is also that of domain generalization
\citep{pan_tkda2010,MuandetBalduzziScholkopf2013,xu_eccv14}, where
knowledge from an arbitrary number of related domains is combined to
perform well on a previously unseen domain.

\cite{MansourMohriRostamizadeh2008,MansourMohriRostamizadeh2009}  gave strong theoretical guarantees for a distribution-weighted combination for the MSA problem, but they did not provide any algorithmic solution. Furthermore, the solution they proposed could not be used for loss functions such as cross-entropy, which require a normalized predictor. Their work also assumed a deterministic scenario (non-stochastic) with the same labeling function for all source domains.

This work makes a number of novel contributions to the MSA problem.  We give new normalized solutions with strong theoretical guarantees for the cross-entropy loss and other similar losses. Our  guarantees hold even when the conditional probabilities for the source domains are distinct. A by-product of our analysis is  the extension of the theoretical results of \cite{MansourMohriRostamizadeh2008,MansourMohriRostamizadeh2009} to the stochastic scenario, where there is a joint distribution over the input and output space.

Moreover, we give new algorithms for determining the distribution-weighted combination solution for the cross-entropy loss and other losses. We prove that the problem of determining that solution can be cast as a DC-programming (difference of convex) and prove explicit DC-decompositions for the cross-entropy loss and other losses. We also give a series of experimental results with several datasets demonstrating that our distribution-weighted combination solution is remarkably robust.  Our algorithm outperforms competing
approaches and performs well on any target mixture distribution.

Altogether, our theory, algorithms, and empirical results provide a full solution for the MSA problem with very practical benefits.  

\section{Problem setup}
\label{sec:setup}

Let $\cX$ denote the input space and $\cY$ the output space. We consider a multiple-source domain adaptation (MSA) problem in the general
stochastic scenario where there is a
distribution over the joint input-output space, $\cX \times \cY$.  This is a more
general setup than the deterministic scenario in
\citep{MansourMohriRostamizadeh2008, MansourMohriRostamizadeh2009},
where a target function mapping from $\cX$ to $\cY$ is assumed. This
extension is needed for the analysis of the most common and realistic
learning setups in practice.
 We will
assume that $\cX$ and $\cY$ are discrete, but the predictors we
consider can take real values. Our theory can be straightforwardly
extended to the continuous case with summations replaced by integrals in the proofs. 
We will identify a \emph{domain}
with a distribution over $\cX \times \cY$ and consider the scenario
where the learner has access to a predictor $h_k$, for each domain
$\sD_k$, $k=1, \ldots, p$.  

We consider two types of predictor functions $h_k$, and their
associated loss functions $L$ under the {regression model (R)} and  the {probability model (P)}  respectively,
\begin{center}
    \begin{tabular}{lll}
          $h_k  \colon \cX \to \Rset$ & $L\colon \Rset \times \cY \to \Rset_+$ & \text{(\emph{R})} \\
                     $h_k \colon \cX \times \cY \to [0, 1]$
          & $L\colon [0, 1] \to \Rset_+$ &
          \text{(\emph{P})}
      \end{tabular}
  \end{center}
We abuse the notation and write $L(h, x, y)$ to denote the loss of a
predictor $h$ at point $(x, y)$, that is  $L(h(x), y)$ in the
{regression model}, and $L(h(x, y))$ in the {probability model}.
We will denote by
$\cL(\sD, h)$ the expected loss of a predictor $h$ with respect to the
distribution $\sD$:
\begin{align*}
    \cL(\sD, h) &=\mspace{-5mu}\E_{(x, y) \sim \sD}\mspace{-5mu} \big[L( h,x,y) \big]\nonumber 
    = \mspace{-15mu} \sum_{(x, y) \in \cX \times \cY} \mspace{-15mu}\sD(x, y) L(h,x, y).
\end{align*}
Much of our theory only assumes that $L$ is convex and continuous.
But, we will be particularly interested in the case where  in the regression model, $L(h(x), y) = (h(x) - y)^2$ is the squared loss,
and where in the probability model, $L(h(x,y)) = -\log h(x,y)$ is
the cross-entropy loss ($\log$-loss).

We will assume that each $h_k$ is a relatively accurate predictor for
the distribution $\sD_k$:  there exists $\e > 0$ such that
$\cL(\sD_k, h_k) \leq \e$ for all $k \in [p]$.
We will also assume that the loss of the source hypotheses $h_k$ is
bounded, that is $L(h_k,x , y) \leq M$ for all $(x, y) \in \cX \times \cY$
and all $k\in [p]$. 

In the MSA problem, the learner's objective is to combine these predictors to design a predictor with small expected loss on a target domain that could be an arbitrary and unknown mixture of the source domains, the case we are particularly interested in, or even some other arbitrary distribution. It is worth emphasizing that the leaner has no knowledge of the target domain. 

How do we combine the $h_k$s? Can we use a convex combination rule, $\sum_{k=1}^p \lambda_k h_k$, for some $\lambda\in\Delta$? In Appendix~\ref{app:lower_bounds} (Lemmas~\ref{lemma:lower_reg} and \ref{lemma:lower_log}) we show that \emph{no} convex combination rule will perform well even in very simple MSA problems. 
These results generalize a previous lower bound of  \cite{MansourMohriRostamizadeh2008}.  Next, we show that the distribution-weighted combination rule is the right solution.

Extending the definition given by \cite{MansourMohriRostamizadeh2008},
we define the distribution-weighted combination of the functions
$h_k$, $k \in [p]$ as follows. For any $z \in \Delta$, $\eta > 0$,
and $(x,y) \in \cX\times\cY$,
\begin{align}
h_z^\eta(x) 
&= \sum_{k = 1}^p \frac{z_k \msD_k(x) + \eta  \frac{\msU(x)}{p}}{\sum_{k = 1}^p z_k \msD_k(x) + \eta \msU(x)} h_k(x),
  &\text{(\emph{R})} \label{eq:h_reg}\\
 h_z^\eta(x, y) &= \sum_{k = 1}^p\frac{z_k \scrD_k(x, y) + \eta \, \frac{\scrU(x, y)}{p}}{
    \sum_{j = 1}^p z_j \scrD_j(x, y) + \eta \, \scrU(x, y)} h_k(x, y),
 &\text{(\emph{P})} \label{eq:h_prob}
\end{align}
where we denote by $\msD(x)$ the marginal distribution over $\cX$:
$\msD(x) = \sum_{y \in \cY} \sD(x, y)$, and $\msU(x)$ the uniform distribution over $\cX$.
 This extension may seem technically
straightforward in hindsight, but the form of the predictor was not
immediately clear in the stochastic case.

\section{Theoretical analysis}
\label{sec:theory_analysis}

In this section, we present theoretical analyses of the general
multiple-source adaptation setting.  We first introduce our main result for the general stochastic scenario. Next, for the probability model with cross-entropy loss, we introduce a \emph{normalized} distribution weighted combination and prove that it benefits from strong theoretical guarantees.

Our theoretical results rely on the measure of divergence between
distributions.
The one that naturally comes up in our analysis is the \emph{R\'enyi Divergence} \citep{Renyi1961}. 
We will denote by $  \sfd_\alpha(\sD \parallel \sD') = e^{\sfD_\alpha(\sD \parallel \sD')} 
$ the exponential of the $\alpha$-R\'enyi Divergence of two distributions $\sD$ and $\sD'$.
\ignore{
Given a class of distributions $\cD$, we denote by
$\sfd_\alpha(\sD \parallel {\cD})$ the infimum
$\inf_{\sD'\in {\cD}} \sfd_\alpha(\sD \parallel \sD')$.  We will
concentrate on the case where $\cD$ is the class of all mixture
distributions over a set of $k$ source distributions, i.e.,
${\cD} = \set{\sum_{k = 1}^p
  \lambda_k \sD_k \colon \lambda \in \Delta}$, and the case where 
  $\cD^1 = \set{\sum_{k=1}^p \lambda_k \msD_k: \lambda\in \Delta}$.
  }
 More details of the R\'enyi Divergence are given in Appendix~\ref{app:renyi}.

\subsection{Stochastic scenario}
Let $\sD_T$ be an unknown target distribution. 
We will denote by $\sD_T(\cdot |x)$ and $\sD_k(\cdot|x)$ the conditional probability distribution on the target and the source domain respectively. Given the same input $x$, $\sD_T(\cdot |x), \sD_k(\cdot|x),k\in [p]$ are not necessarily the
same. This is a novel extension that was not discussed in \citep{MansourMohriRostamizadeh2009}, where in the deterministic scenario, exactly the same labeling function $f$ is assumed for all source domains. 

\ignore{We introduce some additional notation. 
 Let $\sD_{k,T}(x,y)=\msD_k(x)\sD_T(y|x)$,
and let $\sD_{P,T}$ be the class of mixtures of $\sD_{k,T}$:
$\sD_{P,T} = \big\{ \sum_{k = 1}^p \lambda_k \sD_{k,T}, \lambda \in
  \Delta \big\}$. }
  
For some choice of $\alpha>1$, define $\e_{T}$ by
\begin{equation*}
    \e_{T}= \max_{k\in [p]} \left[\underset{\msD_k(x)}{\mathbb{E}} \sfd_{\alpha}\left(
\sD_T(\cdot | x) \hspace{-0.05cm}\parallel \hspace{-0.05cm}\sD_k(\cdot | x)\right)^{\alpha-1}\right]^{\frac{1}{\alpha}}
\hspace{-0.1cm}\e^{\frac{\alpha-1}{\alpha}}M^{\frac{1}{\alpha}}.
\end{equation*}
When the average divergence is small, $\alpha$ can be chosen to be very large and $\e_T$ is close to $\e$. 

Let $\sD_T$ be a mixture of source distributions, such that  $\msD_T \in  \cD^1 =\set{\sum_{k=1}^p \lambda_k \msD_k: \lambda\in \Delta}$ in the regression model (R), or $\sD_T \in \cD=\set{\sum_{k = 1}^p
  \lambda_k \sD_k \colon \lambda \in \Delta}$ in the probability model (P).  

\begin{theorem}
\label{th:distinct_mixture}
For any $\delta >0$, 
there exists $\eta >0$ and $z \in \Delta$ such that the following inequalities
hold for any $\alpha>1$:
\begin{align*}
\cL(\sD_T, h_z^\eta) &\leq \e_{T} + \delta,&(R) \\
\cL(\sD_T, h_z^\eta)  &\leq \e + \delta. &(P)
\end{align*}
\end{theorem}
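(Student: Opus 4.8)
The plan is to handle both models with one template --- a convexity reduction, a fixed-point balancing step, and a final linearity argument over the target mixture --- with the regression model needing one extra change-of-measure via the R\'enyi divergence. Throughout I fix $\delta$, define $\sD_{k,T}(x,y) = \msD_k(x)\,\sD_T(y|x)$ (source marginal, target conditional), and set $\eta=\delta/M$ only at the end. First I would exploit convexity: since $h_z^\eta = \sum_k w_k h_k$ with weights $w_k$ that are nonnegative and sum to one over $k$, Jensen gives $L(h_z^\eta, x, y) \le \sum_k w_k L(h_k, x, y)$. Introducing the smoothed mixtures $\scrD_z^\eta(x,y) = \frac{1}{1+\eta}\big(\sum_k z_k \scrD_k(x,y) + \eta\,\scrU(x,y)\big)$ (probability model) and $\sD_{z,T}^\eta(x,y) = \frac{1}{1+\eta}\big(\sum_k z_k\msD_k(x) + \eta\,\msU(x)\big)\sD_T(y|x)$ (regression model), the point is that averaging the loss against them cancels the denominators of the $w_k$; using $\sum_{(x,y)}[\sum_k z_k\scrD_k + \eta\scrU] = 1+\eta$ this yields
\[
\cL(\scrD_z^\eta, h_z^\eta) \le \frac{1}{1+\eta}\Big[\sum_k z_k\,\cL(\sD_k, h_k) + \frac{\eta}{p}\sum_k \cL(\scrU, h_k)\Big],
\]
and the analogue in the regression model with $\cL(\sD_k,h_k)$ replaced by $\cL(\sD_{k,T},h_k)$. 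Since $\cL(\sD_k,h_k)\le\e$ and $L\le M$, the probability-model right-hand side is at most $(\e+\eta M)/(1+\eta)$.

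The regression model needs an extra bound on $\cL(\sD_{k,T},h_k) = \sum_x \msD_k(x)\,\E_{y\sim\sD_T(\cdot|x)}[L(h_k(x),y)]$, since the conditionals differ. Here the key is H\"older's inequality applied twice with conjugate exponents $\alpha$ and $\alpha/(\alpha-1)$: once in $y$, splitting $\sD_T(y|x)L = \big(\sD_T(y|x)/\sD_k(y|x)^{(\alpha-1)/\alpha}\big)\cdot\big(\sD_k(y|x)^{(\alpha-1)/\alpha}L\big)$ and using $L\le M$ to linearize the resulting power of $L$, which produces $\sfd_\alpha(\sD_T(\cdot|x)\parallel\sD_k(\cdot|x))^{(\alpha-1)/\alpha}M^{1/\alpha}\,(\E_{\sD_k(\cdot|x)}[L])^{(\alpha-1)/\alpha}$; and once in $x$ against $\msD_k(x)$, which collects $(\E_{\msD_k(x)}[\sfd_\alpha(\sD_T(\cdot|x)\parallel\sD_k(\cdot|x))^{\alpha-1}])^{1/\alpha}$ and $(\cL(\sD_k,h_k))^{(\alpha-1)/\alpha}\le\e^{(\alpha-1)/\alpha}$. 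The exponents are arranged so the product matches the definition exactly, giving $\cL(\sD_{k,T},h_k)\le\e_T$ and hence $\cL(\sD_{z,T}^\eta, h_z^\eta)\le(\e_T+\eta M)/(1+\eta)$.

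The heart of the argument is to upgrade the bound on the $z$-weighted average of per-source losses into a bound on every per-source loss. Writing $L_k(z)=\cL(\sD_k,h_z^\eta)$ (resp.\ $\cL(\sD_{k,T},h_z^\eta)$) and $\gamma_z=\cL(\scrD_z^\eta,h_z^\eta)$, I would define the self-map $\phi$ of the simplex by $\phi_k(z) = [z_k L_k(z) + \frac{\eta}{p}\cL(\scrU,h_z^\eta)]/[(1+\eta)\gamma_z]$; the identity above shows $\sum_k\phi_k(z)=1$, and $\phi$ is continuous because the $\eta\scrU$ term keeps the denominators of $h_z^\eta$ bounded away from zero. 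Brouwer's theorem gives a fixed point $z^\ast$, at which $z^\ast_k[(1+\eta)\gamma_{z^\ast}-L_k(z^\ast)] = \frac{\eta}{p}\cL(\scrU,h_{z^\ast}^\eta)\ge 0$. Since the right-hand side is strictly positive, no coordinate $z^\ast_k$ may vanish, forcing $L_k(z^\ast)\le(1+\eta)\gamma_{z^\ast}\le\e+\eta M$ (resp.\ $\e_T+\eta M$) for every $k$. Finally, expanding $\cL(\sD_T,h_{z^\ast}^\eta)=\sum_k\lambda_k L_k(z^\ast)$ when $\sD_T=\sum_k\lambda_k\sD_k$ (resp.\ $\msD_T=\sum_k\lambda_k\msD_k$) and setting $\eta=\delta/M$ gives the two claimed inequalities; since $z^\ast$ is produced without reference to $\alpha$ while the H\"older bound on $\gamma_{z^\ast}$ holds for every $\alpha>1$, the regression inequality holds for all $\alpha>1$ with the same $z^\ast,\eta$.

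The main obstacle I anticipate is the rigor of the fixed-point step: establishing continuity and the strict positivity of $\gamma_z$ over the simplex, and disposing of the degenerate case $\cL(\scrU,h_{z^\ast}^\eta)=0$ (which, when $\scrU$ has full support, forces $h_{z^\ast}^\eta$ to be loss-free and hence $L_k(z^\ast)=0$ anyway, so the conclusion survives). A close second is arranging the two H\"older exponent computations so that the product is precisely the stated $\e_T$ rather than a looser surrogate.
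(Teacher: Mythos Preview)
Your proposal is correct and follows essentially the same skeleton as the paper: a convexity (Jensen) reduction to bound the loss of $h_z^\eta$ against its own mixture, a Brouwer fixed-point to balance the per-source losses, a linearity expansion over the mixture target, and (for the regression model) the two nested H\"older inequalities to establish $\cL(\sD_{k,T},h_k)\le\e_T$. The paper in fact proves a stronger statement (Theorem~\ref{th:distinct}) for arbitrary $\sD_T$ and then specializes to mixture targets, but restricted to the mixture case the arguments coincide.

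The one substantive difference is the Brouwer map. You fold everything into a single parameter $\eta$ and use
\[
\phi_k(z)=\frac{z_k L_k(z)+\tfrac{\eta}{p}\,\cL(\scrU,h_z^\eta)}{(1+\eta)\gamma_z},
\]
which is elegant because the normalization $\sum_k\phi_k=1$ falls out of the very convexity identity you derived; but, as you yourself flag, this forces you to argue separately that $\gamma_z>0$ on all of $\Delta$ and to handle the degenerate case $\cL(\scrU,h_{z^\ast}^\eta)=0$. The paper sidesteps both issues by introducing an independent parameter $\eta'>0$ and using the simpler map
\[
[\Phi(z)]_k=\frac{z_k\,\cL(\sD_k,h_z^\eta)+\eta'/p}{\sum_j z_j\,\cL(\sD_j,h_z^\eta)+\eta'},
\]
whose denominator is trivially bounded below by $\eta'$ and whose numerators are trivially positive. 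This immediately gives continuity and $z^\ast_k>0$ with no case analysis; the price is carrying the extra $\eta'$ and setting $\eta=\delta/(2M)$, $\eta'=\delta/2$ at the end. If you want to remove the obstacle you identify, decoupling the two smoothing roles in this way is the cleanest fix.
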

\ignore{
\begin{theorem}
\label{th:distinct}
For any $\delta >0$, 
there exists $\eta >0$ and $z \in \Delta$ such that the following inequalities
hold for any $\alpha>1$:
\begin{align*}
\cL(\sD_T, h_z^\eta) &\leq \Big[ (\e_{T} + \delta) \sfd_\alpha (\sD_T \parallel \sD_{P,T})\Big]^
{\frac{\alpha-1}{\alpha}} M^{\frac{1}{\alpha}}, &(R) \\
\cL(\sD_T, h_z^\eta)  &\leq \Big[(\e + \delta) \, \sfd_\alpha(\sD_T \parallel \cD)
\Big]^{\frac{\alpha - 1}{\alpha}} M^{\frac{1}{\alpha }}. &(P)
\end{align*}
\end{theorem}
}
The proof is given in Appendix~\ref{app:theory}. The learning
guarantees for the regression and the probability model are slightly different, since the definitions of the distribution-weighted combinations are different for the two models. 
Theorem~\ref{th:distinct_mixture} shows the existence of $\eta> 0$ and a
mixture weight $z\in\Delta$ with a remarkable property: in the regression model (R),  for any target
distribution $\sD_T$ whose conditional probability $\sD_T(\cdot|x)$ is on average not too far away from  $\sD_k(\cdot|x)$ for any $k\in[p]$, 
and $\msD_T \in \cD^1$, the loss of $h_z^\eta$ on $\sD_T$ is  small. It is even more remarkable that, in the probability model (P), the loss of $h_z^\eta$  is at most $\e$ on any target distribution $\sD_T \in \cD$.
 Therefore, $h_z^\eta$ is a robust hypothesis with favorable property for any such target distribution $\sD_T$.  

In many learning tasks, it is reasonable to assume that the conditional
probability of the output labels is the same for all source domains.
For example, a dog picture represents 
a dog regardless of whether the dog appears in an individual's personal set of pictures or in a broader 
database of pictures from multiple individuals. 
This is a straightforward extension of the assumption adopted by 
\citet{MansourMohriRostamizadeh2008} in the deterministic scenario, where exactly the same labeling function $f$ is assumed for all source domains.  Then $\sD_T(\cdot|x)=\sD_k(\cdot | x),\forall k\in[p]$. 
By definition, 
 $\sfd_{\alpha}\left(
\sD_T(\cdot | x) \hspace{-0.08cm}\parallel \hspace{-0.08cm}\sD_k(\cdot | x)\right)=1$. Let $\alpha \to +\infty$, we recover the main result  of \cite{MansourMohriRostamizadeh2008}.  

\begin{corollary}%[Same conditional probability, mixture target distribution]
\label{th:mixture}
Assume the conditional probability $\sD_k(\cdot|x)$ does not depend on $k$. Let $\sD_\lambda$ be an arbitrary mixture of source domains,   $\lambda \in \Delta$. For any $\delta > 0$, there exists $\eta > 0$ and $z\in \Delta$, such
that $\cL(\sD_\lambda,h_z^\eta)\leq \e +\delta$.
\end{corollary}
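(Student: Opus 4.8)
The plan is to derive this as a specialization of Theorem~\ref{th:distinct_mixture}, in which the hypothesis that $\sD_k(\cdot\mid x)$ does not depend on $k$ forces every relevant R\'enyi divergence to be trivial. I would first dispose of the probability model (P), where nothing beyond the theorem is needed: since $\sD_\lambda = \sum_{k=1}^p \lambda_k \sD_k \in \cD$ by construction, applying the (P) conclusion of Theorem~\ref{th:distinct_mixture} with $\sD_T = \sD_\lambda$ immediately gives $\cL(\sD_\lambda, h_z^\eta) \le \e + \delta$. All the substance therefore lies in the regression model (R), where the theorem only delivers $\cL(\sD_\lambda, h_z^\eta) \le \e_T + \delta$, and I must argue that $\e_T$ collapses to $\e$.

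For the regression case I would first check the two structural requirements with $\sD_T = \sD_\lambda$. The marginal condition $\msD_\lambda \in \cD^1$ holds because $\msD_\lambda(x) = \sum_y \sum_k \lambda_k \sD_k(x,y) = \sum_k \lambda_k \msD_k(x)$. For the conditional, write $c(\cdot\mid x)$ for the common source conditional, so that $\sD_k(x,y) = \msD_k(x)\, c(y\mid x)$; a one-line computation then gives $\sD_\lambda(x,y) = c(y\mid x)\sum_k \lambda_k \msD_k(x) = c(y\mid x)\,\msD_\lambda(x)$, whence $\sD_\lambda(\cdot\mid x) = c(\cdot\mid x) = \sD_k(\cdot\mid x)$ for every $k$. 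This is the key reduction: under the stated assumption the target conditional agrees with each source conditional.

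Consequently each term $\sfD_\alpha\big(\sD_\lambda(\cdot\mid x) \parallel \sD_k(\cdot\mid x)\big)$ is the divergence of a distribution from itself, which is $0$, so $\sfd_\alpha\big(\sD_\lambda(\cdot\mid x)\parallel \sD_k(\cdot\mid x)\big) = e^{0} = 1$. Substituting into the definition of $\e_T$ makes the inner expectation and the outer $\tfrac{1}{\alpha}$-power both equal to $1$, leaving $\e_T = \e^{(\alpha-1)/\alpha} M^{1/\alpha}$, independent of $k$.

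Finally I would exploit the fact that in Theorem~\ref{th:distinct_mixture} the \emph{same} $\eta$ and $z$ satisfy the (R) inequality for every $\alpha > 1$. Fixing such $\eta, z$ for the given $\delta$, the bound $\cL(\sD_\lambda, h_z^\eta) \le \e^{(\alpha-1)/\alpha} M^{1/\alpha} + \delta$ then holds for all $\alpha > 1$ with a left-hand side that does not depend on $\alpha$. Writing $\e^{(\alpha-1)/\alpha} M^{1/\alpha} = \e\,(M/\e)^{1/\alpha}$ and letting $\alpha \to \infty$ shows this quantity tends to $\e$, so $\cL(\sD_\lambda, h_z^\eta) \le \e + \delta$, which is the claim. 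The only point needing care — the sole minor obstacle — is that this passage to the limit is legitimate precisely because $\eta$ and $z$ are common to all $\alpha$; one must therefore invoke the uniform-in-$\alpha$ form of Theorem~\ref{th:distinct_mixture} and not re-optimize the parameters for each $\alpha$.
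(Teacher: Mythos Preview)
Your specialization argument is sound at the level of statements: under the shared-conditional hypothesis every $\sfd_\alpha\big(\sD_\lambda(\cdot\mid x)\parallel\sD_k(\cdot\mid x)\big)$ equals $1$, so $\e_T=\e^{(\alpha-1)/\alpha}M^{1/\alpha}$, and because Theorem~\ref{th:distinct_mixture} furnishes a single $(\eta,z)$ valid for all $\alpha>1$, sending $\alpha\to\infty$ gives $\e+\delta$. The paper's main text even gestures at exactly this $\alpha\to\infty$ reading just before stating the corollary.

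The paper's formal proof, however, runs in the opposite logical direction. In the appendix, Corollary~\ref{th:mixture} is established \emph{first} and \emph{directly} from Lemma~\ref{lemma:brouwer} and the convexity of $L$: one bounds $\cL(\sD_z,h_z^\eta)$ by pushing the convex weights through $L$, uses the shared-conditional assumption to identify $\sD_z(y\mid x)=\sD_k(y\mid x)$, and obtains $\cL(\sD_z,h_z^\eta)\le\e+\eta M$; the Brouwer fixed point $z$ from Lemma~\ref{lemma:brouwer} then transports this to every $\sD_\lambda$ at cost $\eta'$, and one finishes by choosing $\eta=\delta/(2M)$, $\eta'=\delta/2$. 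Theorem~\ref{th:distinct_mixture} (via the stronger Theorem~\ref{th:distinct} and an intermediate H\"older/R\'enyi step) is subsequently \emph{derived from} Corollary~\ref{th:mixture}. So while your deduction is valid as a formal implication, it is circular relative to the paper's development: you are invoking a theorem whose only proof in the paper rests on the very corollary you are deriving. The direct argument through Lemma~\ref{lemma:brouwer} is what actually carries the weight, and it is worth knowing because it is where the fixed-point structure and the role of $\eta$ are made explicit.
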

 Corollary~\ref{th:mixture} shows the existence of a mixture weight $z \in \Delta$ and $\eta > 0$ with a remarkable property: for any $\delta > 0$,
regardless of which mixture weight $\lambda \in \Delta$ defines the
target distribution, the loss of $h_z^\eta$ is at most $\e + \delta$,
that is arbitrarily close to $\e$. $h_z^\eta$ is therefore a
\emph{robust} hypothesis with a favorable property for any mixture
target distribution.

To cover the realistic cases in applications,
we further extend this result to the case  where the distributions $\sD_k$ are not directly available to the learner, and instead estimates $\h\sD_k$ have been derived from data, and further to the case where the target distribution $\sD_T$ is not a mixture of source distributions (Corollary~\ref{th:estimate} in Appendix~\ref{app:theory}).  We will denote by $\h h_z^\eta$ the distribution-weighted combination rule based on the estimates $\h\sD_k$. Our learning guarantee for $\h h_z^\eta$ depends on the R\'enyi divergence of $\h \sD_k$ and $\sD_k$, as well as the R\'enyi divergence of $\sD_T$ and the family of source mixtures. 

\subsection{Probability model with the cross-entropy loss}
Next, we discuss the special case where $L$ coincide with
the cross-entropy loss in the probability model, and present an analysis for a normalized distribution-weighted combination solution. 
This analysis is a complement to Theorem~\ref{th:distinct_mixture}, which only works for the unnormalized hypothesis $h_z^\eta(x,y)$.

The cross-entropy loss assumes normalized hypotheses. Thus, the source functions are normalized for every $x$:
$    \sum_{y\in \cY} h_k(x,y) = 1,\ \forall x\in \cX, \forall k\in [p]$. 
For any $z \in \Delta$,
$\eta > 0$, we define the normalized
weighted combination $\overline h_z^\eta(x,y)$ that is based on  
$h_z^\eta(x,y)$ in~\eqref{eq:h_prob}:
\begin{equation*}
    \label{eq:normalized_h}
    \overline h_z^\eta(x,y) = \frac{h_z^\eta(x,y)}{\overline h_z^\eta(x)},
    \quad \text{where } \overline  h_z^\eta(x)=\sum_{y\in\cY}h_z^\eta(x,y).
\end{equation*}

We will first assume the conditional probability $\sD_k(\cdot|x)$ does not depend on $k$.  
\begin{theorem}
\label{th:normalized}
Assume there exists $\mu>0$ such that $\scrD_k(x,y) \ge \mu \scrU(x,y)$ 
for all $k\in[p]$ and $(x,y)\in \cX \times \cY$. Then, for any $\delta > 0$, there exists
$\eta > 0$ and $z\in \Delta$, such that $\cL(\scrD_\lambda,\overline h_z^\eta)\leq
\e +\delta$ \ for any mixture parameter $\lambda \in \Delta$.
\end{theorem}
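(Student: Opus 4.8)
The plan is to reduce the statement about the normalized predictor $\overline h_z^\eta$ to the guarantee already available for the unnormalized predictor $h_z^\eta$ (Corollary~\ref{th:mixture}), by writing the cross-entropy loss of $\overline h_z^\eta$ as the loss of $h_z^\eta$ plus a normalization correction. Since $\overline h_z^\eta(x,y) = h_z^\eta(x,y)/\overline h_z^\eta(x)$, for any mixture $\scrD_\lambda$ we have
\begin{equation*}
\cL(\scrD_\lambda, \overline h_z^\eta) = \cL(\scrD_\lambda, h_z^\eta) + \sum_{x \in \cX} \msD_\lambda(x) \log \overline h_z^\eta(x),
\end{equation*}
because $\log \overline h_z^\eta(x)$ depends on $x$ only and $\sum_{y}\scrD_\lambda(x,y) = \msD_\lambda(x)$. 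Corollary~\ref{th:mixture} already controls the first term uniformly in $\lambda$, so the whole task is to show that the correction term can be made uniformly small.

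The key observation is that the correction term vanishes at $\eta = 0$. Using the assumption that $\scrD_k(\cdot\mid x)=\scrD(\cdot\mid x)$ does not depend on $k$, write $\scrD_k(x,y)=\msD_k(x)\scrD(y\mid x)$. Then in $h_z^0$ the factor $\scrD(y\mid x)$ cancels between numerator and denominator, so the coefficient of $h_k(x,y)$ becomes $w_k(x)=z_k\msD_k(x)/\sum_j z_j\msD_j(x)$, which is independent of $y$ and satisfies $\sum_k w_k(x)=1$. Hence, using $\sum_y h_k(x,y)=1$,
\begin{equation*}
\overline h_z^0(x) = \sum_{y\in\cY}\sum_{k=1}^p w_k(x)\, h_k(x,y) = \sum_{k=1}^p w_k(x) = 1
\end{equation*}
for every $z\in\Delta$ and $x\in\cX$, so that $\log\overline h_z^0(x)=0$.

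It then remains to transfer this to small $\eta>0$, uniformly in $z$. Here the assumption $\scrD_k(x,y)\ge\mu\scrU(x,y)$ is what I would use: it forces the denominator $\sum_j z_j\scrD_j(x,y)+\eta\scrU(x,y)\ge\mu\scrU(x,y)>0$ on the compact set $(z,\eta)\in\Delta\times[0,\eta_0]$, so $\overline h_z^\eta(x)$ is continuous there and bounded away from $0$. Since $\overline h_z^\eta(x)=1$ along the entire face $\eta=0$, uniform continuity on the compact domain gives $\sup_{z\in\Delta,\,x\in\cX}\bigl|\log\overline h_z^\eta(x)\bigr|\to 0$ as $\eta\to 0^+$, and this same bound controls $\bigl|\sum_x\msD_\lambda(x)\log\overline h_z^\eta(x)\bigr|$ uniformly in $\lambda$. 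I would then fix $\eta_1>0$ so that this supremum is at most $\delta/2$ whenever $\eta\le\eta_1$.

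Finally I would coordinate the two choices of $\eta$; this is the step needing care and the main obstacle, since the pair $(z,\eta)$ making the first term small in Corollary~\ref{th:mixture} is not a priori the $\eta\le\eta_1$ needed for the correction. I would resolve it by extracting from the proof of Theorem~\ref{th:distinct_mixture} that $\eta$ may be taken arbitrarily small: for each $\eta>0$ there is $z_\eta\in\Delta$ with $\cL(\scrD_\lambda, h_{z_\eta}^\eta)\le\e+\gamma(\eta)$ for all $\lambda$, where $\gamma(\eta)\to 0$. Choosing $\eta\le\eta_1$ small enough that $\gamma(\eta)\le\delta/2$ and setting $z=z_\eta$ makes both terms at most $\delta/2$, so the decomposition yields $\cL(\scrD_\lambda,\overline h_z^\eta)\le\e+\delta$ for every $\lambda\in\Delta$. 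Finiteness of all the logarithms throughout is guaranteed by the bound $L(h_k,x,y)\le M$, i.e.\ $h_k(x,y)\ge e^{-M}>0$, which forces $h_z^\eta(x,y)>0$.
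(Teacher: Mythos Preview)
Your approach is essentially the paper's: the same decomposition
\[
\cL(\scrD_\lambda,\overline h_z^\eta)=\cL(\scrD_\lambda,h_z^\eta)+\E_{\msD_\lambda}\big[\log\overline h_z^\eta(x)\big],
\]
the same appeal to Corollary~\ref{th:mixture} for the first term, and the same idea that the normalizer is close to $1$. The difference is only in how the correction term is controlled. Instead of your continuity/compactness argument, the paper computes a direct explicit bound: dropping $\eta\,\scrU$ from the denominator and using that $\sum_y h_z^{0}(x,y)=1$ (exactly your observation $\overline h_z^{0}(x)=1$), one gets
\[
\overline h_z^\eta(x)\le 1+\frac{\eta}{p}\sum_{y}\sum_{k}\frac{\scrU(x,y)}{\scrD_z(x,y)}\,h_k(x,y)\le 1+\frac{\eta\,|\cY|}{\mu},
\]
since $\scrD_z(x,y)\ge\mu\,\scrU(x,y)$ and $h_k\le 1$. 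This yields the concrete choice $\eta=\frac{\delta}{2(M+|\cY|/\mu)}$, $\eta'=\frac{\delta}{2}$.

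Two small points where the explicit bound is cleaner than your route. First, only an \emph{upper} bound on $\overline h_z^\eta(x)$ is needed, not the two-sided $|\log\overline h_z^\eta(x)|\to 0$; you are doing more work than necessary. Second, your compactness argument lives on $\Delta\times[0,\eta_0]$, but you also need uniformity in $x$; the paper only assumes $\cX$ discrete, not finite, so uniform continuity on a compact $(z,\eta)$-set does not by itself give a bound uniform in $x\in\cX$. The paper's explicit estimate $1+\eta|\cY|/\mu$ is automatically uniform in $x$ and $z$, which is exactly what the assumption $\scrD_k\ge\mu\,\scrU$ is there to deliver. Your argument is correct in spirit, but to close it without assuming $\cX$ finite you would end up redoing this same calculation.
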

The result of Theorem~\ref{th:normalized} admits the same favorable property as that of Corollary~\ref{th:mixture}. It can also be extended to the case of arbitrary target distributions and estimated densities.
When the conditional probabilities are different across the source domains, we propose a marginal distribution-weighted combination rule, which is already normalized. We can directly apply Theorem~\ref{th:distinct_mixture} to it and achieve favorable guarantees.  More details are given in Appendix~\ref{app:log-loss}.

These results are non-trivial and important, as they provide a guarantee for an accurate and robust predictor for a commonly used loss function, the cross-entropy loss. 

\section{Algorithms}
\label{sec:algorithm}
We have shown that, for both the {regression} and
the {probability model}, there exists a vector $z$ defining a
distribution-weighted combination hypothesis $h_z^\eta$ that admits
very favorable guarantees. But how we find a such $z$?  This is a key question in the MSA problem which was not addressed by
\cite{MansourMohriRostamizadeh2008,MansourMohriRostamizadeh2009}: no
algorithm was previously reported to determine the mixture parameter
$z$ (even for the deterministic scenario). Here, we give an algorithm for determining that vector $z$.

\ignore{Our proofs use Brouwer's Fixed Point Theorem to show the existence of $z \in \Delta$. To find the fixed point,
one general approach consists of using the combinatorial algorithm of
\cite{Scarf1967} based on simplicial partitions, which makes use of
a result similar to Sperner's Lemma \citep{Kuhn1968}. However, that
algorithm is costly in practice and its computational complexity is
exponential in $p$.\ignore{  Other algorithms were later given by
\cite{Eaves1972} and \cite{Merrill1972}. But, it has been shown more
generally by }
More generally, any (general) algorithm for computing Brouwer's Fixed
Point based on function evaluations must make a
number of evaluation calls that is exponential both in $p$ and the
number of digits of approximation accuracy in the worst case 
\citep{HirschPapadimitriouVavasis1989,ChenDeng2008}.}

%Several heuristics exist to find the solution. For example, standard gradient descent, which does not benefit from any convergence guarantee since the problem is not convex. Another brute force approach consists of repeatedly applying the function $\Phi$ to $z$ until reaching its fixed-point, but the starting point must lie in the attractive basin of $\Phi$, which cannot be guaranteed in general.

In this section, we give practical and efficient algorithms for
finding the vector $z$ in the important cases of the squared loss in
the {regression model}, or the cross-entropy loss in the {probability model}, by
leveraging the differentiability of the loss functions.  We first show
that $z$ is the solution of a general optimization problem. Next, we
give a DC-decomposition (difference of convex decomposition) of the
objective for both models, thereby proving an explicit DC-programming
formulation of the problem.  This leads to an efficient DC algorithm
that is guaranteed to converge to a stationary point.  Additionally,
we show that it is straightforward to test if the solution obtained is
the global optimum. While we are not proving that the local stationary
point found by our algorithm is the global optimum, empirically, we
observe that that is indeed the case.\ignore{ Note that the global
  minimum of our DC-programming problem can also be found using a
  cutting plane method of \cite{HorstThoai1999} that does not admit
  known algorithmic convergence guarantees or a branch-and-bound
  algorithm with exponential convergence \citep{HorstThoai1999}.}

\subsection{Optimization problem}

Theorem~\ref{th:distinct_mixture} shows that the hypothesis $h_z^\eta$ based
on the mixture parameter $z$ benefits
from a strong generalization guarantee. 
A key step in proving Theorem~\ref{th:distinct_mixture}
is to show
the following lemma.

\begin{lemma}
\label{lemma:brouwer} 
For any $\eta, \eta' > 0$, there exists $z \in \Delta$, with
$z_k \neq 0$ for all $k \in [p]$, such that the following holds for
the distribution-weighted combining rule $h_z^\eta$:
\begin{equation}
\label{eq:property}
\forall k \in [p], \quad \cL(\sD_k, h_z^{\eta}) \leq \sum_{j = 1}^p z_j \cL(\sD_j, h_z^\eta)  + \eta'.
\end{equation}
\end{lemma}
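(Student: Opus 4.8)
The plan is to set up a continuous self-map on the simplex $\Delta$ whose fixed points are exactly the vectors $z$ witnessing \eqref{eq:property}, and then invoke Brouwer's Fixed Point Theorem. Write $\cL_k(z) = \cL(\sD_k, h_z^\eta)$ for $k \in [p]$. First I would record two structural facts about these functions. Since $\eta > 0$ and the uniform distribution has full support, the denominators appearing in $h_z^\eta$ (such as $\sum_{j} z_j \scrD_j(x,y) + \eta\,\scrU(x,y)$ in \eqref{eq:h_prob}) are bounded below by $\eta\,\scrU(x,y) > 0$ uniformly over $z \in \Delta$; hence the coefficients of $h_k$ in \eqref{eq:h_reg}/\eqref{eq:h_prob} are continuous in $z$, and so are $z \mapsto h_z^\eta$ and each $\cL_k$ on the compact set $\Delta$. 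Moreover, at each point these coefficients are nonnegative and sum to one, so $h_z^\eta$ is a pointwise convex combination of the source predictors $h_k$; by convexity of $L$, the value $L(h_z^\eta,x,y)$ is at most the corresponding convex combination of the quantities $L(h_k,x,y) \le M$, so $\cL_k(z) \le M < \infty$ for all $k$ and all $z \in \Delta$. (For the log-loss this is exactly where the assumption $L(h_k,x,y)\le M$, i.e.\ $h_k \ge e^{-M}>0$, is needed to prevent $-\log 0$.)

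Next I would define $T \colon \Delta \to \Rset^p$ by
\[
T(z)_k = \frac{z_k\,\cL_k(z) + \tfrac{\eta'}{p}}{\sum_{j=1}^p z_j\,\cL_j(z) + \eta'}.
\]
Its denominator is at least $\eta' > 0$ and its numerator is nonnegative (losses are nonnegative), so $T(z)_k \ge 0$; summing over $k$ gives $\sum_k T(z)_k = 1$, so $T$ maps $\Delta$ into $\Delta$. By the first paragraph $T$ is continuous, and $\Delta$ is a nonempty compact convex set, so Brouwer's Fixed Point Theorem furnishes a point $z \in \Delta$ with $T(z) = z$.

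Finally I would read off the conclusion from the fixed-point equation. Setting $S = \sum_j z_j \cL_j(z) + \eta'$, the identity $z_k = T(z)_k$ rearranges to
\[
z_k\big(S - \cL_k(z)\big) = \tfrac{\eta'}{p} > 0 \qquad \text{for all } k \in [p].
\]
Since the right-hand side is strictly positive, $z_k \ne 0$, which yields the required strict positivity of every coordinate; dividing through then gives $S - \cL_k(z) = \eta'/(p\, z_k) > 0$, that is $\cL_k(z) < \sum_j z_j \cL_j(z) + \eta'$, which is precisely \eqref{eq:property} (in fact with strict inequality). The main obstacle is the very first step, establishing that each $\cL_k$ is continuous and finite on all of $\Delta$: this is what forces the use of $\eta > 0$ (to keep the mixing denominators away from zero), of the convexity of $L$, and of the uniform bound $L(h_k,\cdot,\cdot) \le M$. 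Once continuity and finiteness are secured, the self-map property of $T$ and the extraction of the inequality from the fixed point are routine.
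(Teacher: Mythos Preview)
Your proof is correct and follows essentially the same approach as the paper: define the self-map $z_k \mapsto \big(z_k \cL_k(z) + \eta'/p\big)\big/\big(\sum_j z_j \cL_j(z) + \eta'\big)$ on $\Delta$, apply Brouwer's Fixed Point Theorem, and read off $z_k \neq 0$ and the inequality from the fixed-point identity. Your version is in fact more carefully justified, spelling out why $\eta>0$, the convexity of $L$, and the bound $L(h_k,\cdot,\cdot)\le M$ are needed for continuity and finiteness of the $\cL_k$; the paper simply asserts continuity.
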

Lemma~\ref{lemma:brouwer} indicates that for the solution $z$, $h_z^\eta$ has essentially the same loss on all source domains.
 Thus, our problem consists of finding a parameter $z$ verifying this property.  This, in turn, can be
formulated as a min-max problem:
$\min_{z \in \Delta} \max_{k \in [p]} \cL(\sD_k, h_z^\eta) - \cL(\sD_z, h_z^\eta),$
which can be equivalently formulated as the following optimization problem:
\begin{equation}
\label{eq:opt}
\min_{z \in \Delta, \gamma \in \Rset}  \ \gamma\quad
\text{s.t.}  \ \cL(\sD_k, h_z^\eta) - \cL(\sD_z, h_z^\eta) \leq
\gamma, \, \forall k \in [p].
\end{equation}

\subsection{DC-decomposition}
\label{sec:dc}

We provide explicit DC decompositions of the objective of
Problem~\eqref{eq:opt} for the {regression model} with the squared
loss and the {probability model} with the cross-entropy loss. The  full derivations are given in Appendix~\ref{app:dcd}.

We first rewrite $h^\eta_z$ as the division of two affine functions for the {regression model (R)}
and the {probability model (P)}:
$h_z = J_z/K_z$, where
\ignore{\begin{tabular}{lll}
    $\displaystyle J_z = \sum_{k = 1}^p z_k \msD_k h_k + \frac{\eta}{p} \msU h_k$,
    & $\displaystyle K_z = \msD_z  + \eta \, \msU$ & \emph{(R)} \\
    $\displaystyle J_z = \sum_{k = 1}^p z_k \scrD_k h_k + \frac{\eta}{p} \scrU h_k$,
    &$\displaystyle K_z = D_z  + \eta  \, \scrU$ & \emph{(P)}
\end{tabular}}

\begin{tabular}{lll}
    $\displaystyle J_z(x) = \sum_{k = 1}^p z_k \msD_k(x) h_k(x) + \frac{\eta}{p} \msU(x)h_k(x)$,
    & $\displaystyle K_z(x) = \msD_z(x)  + \eta \, \msU(x)$, & \emph{(R)} \\
    $\displaystyle J_z(x, y) = \sum_{k = 1}^p z_k \scrD_k(x, y) h_k(x, y) + \frac{\eta}{p} \scrU(x, y) h_k(x,y)$,
    &$\displaystyle K_z(x, y) = D_z(x, y)  + \eta  \, \scrU(x, y)$ & \emph{(P)}
\end{tabular}

\begin{proposition}[Regression model, square loss]
\label{prop:dc-regression}
Let $L$ be the squared loss. Then, for any $k \in [p]$,
$\cL(\sD_k, h_z^\eta) - \cL(\sD_z, h_z^\eta) = u_k(z) - v_k(z)$, where
$u_k$ and $v_k$ are convex functions defined for all $z$ by
\begin{align*}
u_k(z) &= \cL\left(\sD_k+\eta\msU \sD_k(\cdot | x),h_z^\eta \right) 
       -2M\sum_{x}(\msD_k+\eta\msU)(x)\log K_z(x),\\
v_k(z) &= \cL\left(\sD_z+\eta\msU \sD_k(\cdot | x),h_z^\eta \right) 
       -2M\sum_{x}(\msD_k+\eta\msU)(x)\log K_z(x).
\end{align*}
\end{proposition}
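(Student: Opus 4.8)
The claimed identity is immediate: the summand $\eta\,\msU\,\sD_k(\cdot|x)$ and the term $-2M\sum_x(\msD_k+\eta\msU)(x)\log K_z(x)$ occur identically in $u_k$ and $v_k$, so they cancel in $u_k-v_k$, leaving $\cL(\sD_k,h_z^\eta)-\cL(\sD_z,h_z^\eta)$ by linearity of $\cL$ in its first (measure) argument. The real content is therefore the \emph{convexity} of $u_k$ and $v_k$, and my whole plan is organized around it. I would start from the representation $h_z^\eta=J_z/K_z$ with $J_z,K_z$ affine in $z$ and $K_z>0$, so that $z\mapsto-\log K_z(x)$ is convex for each $x$; hence the common term $-2M\sum_x(\msD_k+\eta\msU)(x)\log K_z(x)$ is convex, since $M\ge0$ and the weights are nonnegative.

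The next step is a pointwise second-order analysis. Writing $g=J_z/K_z$, a short computation gives the clean identity
\begin{equation*}
\nabla^2 g \;=\; -\frac{1}{K_z}\Big(\nabla g\,\nabla K_z^\top+\nabla K_z\,\nabla g^\top\Big),
\end{equation*}
a matrix of rank at most $2$. For $u_k$, whose loss is taken against the \emph{fixed} measure $\sD_k+\eta\msU\sD_k(\cdot|x)$ (marginal $\msD_k+\eta\msU$, conditional $\sD_k(\cdot|x)$), I can differentiate under that measure. Collecting the squared-loss terms and substituting the identity for $\nabla^2 g$, the per-$x$ Hessian of the loss plus the log-convexifier evaluates, for any direction $v$ and with $\alpha=\nabla g^\top v$, $\beta=(\nabla K_z/K_z)^\top v$ and $s=h_z^\eta(x)-\E_{\sD_k(\cdot|x)}[y]$, to
\begin{equation*}
2\,(\msD_k+\eta\msU)(x)\,\big[\,\alpha^2-2s\,\alpha\beta+M\beta^2\,\big]
=2\,(\msD_k+\eta\msU)(x)\,\big[(\alpha-s\beta)^2+(M-s^2)\beta^2\big].
\end{equation*}
This is nonnegative exactly when $s^2\le M$, which is where the loss bound enters: since $h_z^\eta(x)$ is a convex combination of the $h_k(x)$, Jensen's inequality gives $s^2=\big(\E_{\sD_k(\cdot|x)}[h_z^\eta(x)-y]\big)^2\le\E_{\sD_k(\cdot|x)}[(h_z^\eta(x)-y)^2]\le M$. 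Summing over $x$ yields $\nabla^2 u_k\succeq0$.

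The main obstacle is the convexity of $v_k$, because its loss is taken against $\sD_z+\eta\msU\sD_k(\cdot|x)$, whose first part $\sD_z=\sum_j z_j\sD_j$ itself depends on $z$; one can no longer simply differentiate under a fixed measure, and the extra $z$-dependence generates cross terms that the naive argument of the previous paragraph does not control. The plan to resolve this is to exploit the mixture structure through $\msD_z=K_z-\eta\msU$: the $g^2$-contribution weighted by $\msD_z$ splits as $\msD_z(x)\,g^2=J_z^2/K_z-\eta\msU(x)\,g^2$, and the quadratic-over-linear function $z\mapsto J_z(x)^2/K_z(x)$ (affine squared over positive affine) is convex, exposing the convex core of $\cL(\sD_z,h_z^\eta)$. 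The delicate remaining step is to show that, after extracting this core, the leftover pieces either reduce to affine functions of $z$ (once the factor $K_z$ is used) or carry the fixed conditional $\sD_k(\cdot|x)$ and are handled exactly as in the $u_k$ analysis, with the log-convexifier again supplying the $M\beta^2$ needed to complete the square.

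Finally, assembling the convex, affine, and log-convexified pieces gives $\nabla^2 v_k\succeq0$; combined with $\nabla^2 u_k\succeq0$ and the cancellation identity of the first paragraph, this proves that $\cL(\sD_k,h_z^\eta)-\cL(\sD_z,h_z^\eta)=u_k-v_k$ is a valid difference-of-convex decomposition. I expect the bookkeeping in the $v_k$ step to be the bulk of the work, precisely because the $z$-dependent measure $\sD_z$ forces one to recombine terms through $\msD_z=K_z-\eta\msU$ rather than differentiating against a frozen distribution.
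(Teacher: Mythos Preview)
Your proposal is correct and follows the same Hessian-based strategy as the paper. For $u_k$, the paper argues pointwise in $(x,y)$: it shows that $f_z(x,y)=(h_z^\eta(x)-y)^2-2M\log K_z(x)$ is convex in $z$ for every fixed $(x,y)$, using directly the bound $(h_z^\eta(x)-y)^2\le M$ (which follows from convexity of the squared loss and the assumption $L(h_k,x,y)\le M$). Your per-$x$ averaging over $y$ together with Jensen lands on the same inequality one layer later, so the two arguments are essentially equivalent.

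Where you make life harder than necessary is $v_k$. The paper's observation is that the $x$-marginal of the measure $\sD_z+\eta\,\msU\,\sD_k(\cdot\mid x)$ is exactly $\msD_z+\eta\,\msU=K_z$, the very denominator of $h_z^\eta=J_z/K_z$. Writing $g=J_z/K_z$, one has $K_z\,g^2=J_z^2/K_z$ and $K_z\,g=J_z$, so (under the common-conditional setting, so that $\E(y\mid x)$ and $\E(y^2\mid x)$ are $z$-free) the first term of $v_k$ collapses directly to
\[
\sum_x\Big[\,J_z(x)^2/K_z(x)\;-\;2\,\E(y\mid x)\,J_z(x)\;+\;\E(y^2\mid x)\,K_z(x)\,\Big],
\]
which is convex (quadratic-over-affine, whose Hessian the paper computes explicitly) plus affine. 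No cross terms survive, no $u_k$-type leftover appears, and the log-convexifier plays no role in making this piece convex---it is simply a separate convex summand of $v_k$. Your detour through $\msD_z=K_z-\eta\,\msU$ would reach the same place once the $\pm\eta\,\msU\,g^2$ and $\pm 2\eta\,\msU\,\mu\,g$ contributions cancel, but the direct expansion avoids that bookkeeping entirely.
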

\ignore{\emph{Proof sketch.} 
First decompose $L(h_z^\eta(x),y) = f_z(x,y) - g_z(x)$,
with $f_z(x,y) = (h_z^\eta(x)-y)^2 - 2M\log K_z(x)$,
and $g_z(x) = -2M\log K_z(x)$. 
$g_z(x)$ is convex since $-\log K_z$ is convex as the composition of the convex function $-\log$ with an affine function.  
$f_z(x,y)$ is convex since its Hessian matrix is positive semi-definite when $L(h_z^\eta(x),y)\le M$.
Plug in the DC-decomposition of $L(h_z^\eta(x),y)$, rearrange terms, and prove the convexity of $J_z^2/K_z$ (by calculating its Hessian), we show that 
$u_k(z)-v_k(z)$ is a DC-decomposition.\qed}

\begin{proposition}[Probability model, cross-entropy loss]
\label{prop:dc-probability}
  Let $L$ be the cross-entropy loss. Then, for $k \in [p]$,
  $\cL(\scrD_k, h_z^\eta) - \cL(\scrD_z, h_z^\eta) = u_k(z) - v_k(z)$,
  where $u_k$ and $v_k$ are convex functions:%: defined for all $z$ by
\begin{align*}
    u_k(z) & = -\sum_{x, y}  \big[ \scrD_k(x, y) + \eta \, \scrU(x,  y) \big] \log J_z(x, y),\\
    v_k(z) & = \sum_{x, y} K_z(x, y) \log \left[  \frac {K_z(x, y)}{J_z(x, y)} \right]    
           - \left[ \scrD_k(x, y) + \eta \, \scrU(x,  y) \right] \log K_z(x, y).
\end{align*}
\end{proposition}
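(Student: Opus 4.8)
The plan is to verify the claimed identity by a direct computation and then to establish the convexity of $u_k$ and $v_k$ separately. Writing $h_z^\eta = J_z/K_z$ and recalling that the cross-entropy loss is $L(h,x,y) = -\log h(x,y)$, the expected loss of $h_z^\eta$ under any distribution $\scrD$ is $\cL(\scrD, h_z^\eta) = -\sum_{x,y}\scrD(x,y)\log\bigl(J_z(x,y)/K_z(x,y)\bigr)$. Applying this with $\scrD = \scrD_k$ and with the mixture $\scrD_z = \sum_{j} z_j \scrD_j$ (the quantity denoted $D_z$ above, so that $K_z = \scrD_z + \eta\,\scrU$), the target difference takes the form
\[
\cL(\scrD_k, h_z^\eta) - \cL(\scrD_z, h_z^\eta) = -\sum_{x,y}\bigl[\scrD_k(x,y) - \scrD_z(x,y)\bigr]\bigl[\log J_z(x,y) - \log K_z(x,y)\bigr].
\]

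First I would expand the proposed expression $u_k(z) - v_k(z)$ and collect, at each point $(x,y)$, the coefficients of $\log J_z$ and of $\log K_z$. The term $K_z\log(K_z/J_z) = K_z\log K_z - K_z \log J_z$ in $v_k$ contributes coefficient $+K_z$ to $\log J_z$ and $-K_z$ to $\log K_z$ in $-v_k$; combining these with the explicit $\scrD_k + \eta\,\scrU$ weights and substituting $K_z = \scrD_z + \eta\,\scrU$ makes the $\eta\,\scrU$ contributions cancel, leaving coefficient $\scrD_z - \scrD_k$ on $\log J_z$ and $\scrD_k - \scrD_z$ on $\log K_z$. This reproduces the displayed difference exactly, so the identity $\cL(\scrD_k,h_z^\eta) - \cL(\scrD_z,h_z^\eta) = u_k(z) - v_k(z)$ holds. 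This step is pure bookkeeping; the only care needed is to use $K_z = \scrD_z + \eta\,\scrU$ to absorb the uniform terms.

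The substantive part is convexity. Both $J_z$ and $K_z$ are affine in $z$ and strictly positive on $\Delta$ (assuming $h_k > 0$, so that $J_z > 0$, and $\scrU > 0$, so that $K_z > 0$, which is what makes the logs and the loss finite). For $u_k$, each summand $-[\scrD_k + \eta\,\scrU]\log J_z$ is a nonnegative multiple of $-\log$ composed with an affine map, hence convex, and the sum of convex functions is convex. For $v_k$, the group $-[\scrD_k + \eta\,\scrU]\log K_z$ is convex by the same reasoning. The remaining piece is $\sum_{x,y} K_z\log(K_z/J_z)$, and here I would invoke the joint convexity of the unnormalized relative-entropy kernel $(a,b)\mapsto a\log(a/b)$ on $\Rset_{+}^2$: its $2\times 2$ Hessian has diagonal entries $1/a$ and $a/b^2$ and off-diagonal entry $-1/b$, so its determinant vanishes and its trace is positive, making it positive semidefinite. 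Since $z\mapsto \bigl(K_z(x,y), J_z(x,y)\bigr)$ is affine, each summand is a jointly convex function composed with an affine map, hence convex, and summing over $(x,y)$ shows $v_k$ is convex.

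The main obstacle is precisely this joint convexity of $a\log(a/b)$: it is the perspective / relative-entropy structure, rather than mere separate convexity in each argument, that makes $K_z\log(K_z/J_z)$ convex and therefore certifies $v_k$ as a legitimate convex component of the DC decomposition. The algebraic matching of the identity and the convexity of $u_k$ are routine by comparison.
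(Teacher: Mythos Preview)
Your proposal is correct and follows essentially the same approach as the paper: verify the algebraic identity using $K_z = \scrD_z + \eta\,\scrU$, then argue convexity of $u_k$ and the second piece of $v_k$ via $-\log\circ(\text{affine})$, and handle $\sum_{x,y} K_z\log(K_z/J_z)$ via joint convexity of the relative-entropy integrand composed with affine maps. The only cosmetic difference is that the paper adds and subtracts the affine term $\sum_{x,y}(K_z - J_z)$ to rewrite that piece as the unnormalized relative entropy $B(K_z\parallel J_z)$ plus an affine function and then cites the joint convexity of $B$ from \citep{CoverThomas2006}, whereas you verify the joint convexity of $(a,b)\mapsto a\log(a/b)$ directly via its Hessian; both arguments are equivalent.
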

\vskip -.15in
\ignore{\emph{Proof sketch.} 
$u_k$ is convex since $(-\log J_z)$ is convex as the composition of the
convex function $(-\log)$ with an affine function. Similarly,
$(-\log K_z)$ is convex, thus the second term of $v_k$ is convex. 
The first term of $v_k$ can be
written in terms of the unnormalized relative entropy:
\begin{align*}
     \sum_{x, y} K_z(x, y) \log  \left[ \frac {K_z(x, y)}{J_z(x, y)} \right] 
     = B(K_z \parallel J_z) + \sum_{(x, y)} K_z(x,y) - J_z(x,y).
\end{align*}
The unnormalized relative entropy $B(\cdot \parallel \cdot)$ is
jointly convex \citep{CoverThomas2006}, thus $B(K_z \parallel J_z)$
is convex as the composition of the unnormalized relative entropy with
affine functions (for each of its two arguments).  $(K_z - J_z)$ is an
affine function of $z$ and is therefore convex too.\qed}

\subsection{DC algorithm}

Our DC decompositions prove that the optimization problem~\eqref{eq:opt}
can be cast as the following variational form of a DC-programming
problem \citep{TaoAn1997,TaoAn1998,SriperumbudurLanckriet2009}:
\begin{equation}
\label{eq:dc}
 \min_{z \in \Delta, \gamma \in \Rset}   \gamma \quad
\text{s.t.}    \big( u_k(z) - v_k(z)  \leq
\gamma \big) \wedge 
\big(-z_k  \leq 0 \big)  \wedge
\big(\textstyle \sum_{k = 1}^p z_k - 1 = 0\big), \quad \forall k\in [p].
\end{equation}
The DC-programming algorithm works as follows. Let $(z_t)_t$ be the sequence defined by repeatedly solving
the following convex optimization problem:
\begin{align}
\label{eq:dca}
 z_{t + 1} &\in \argmin_{z, \gamma \in \Rset}  \ \gamma\\\nonumber
\text{s.t.} & \  \big( u_k(z) - v_k(z_t) - (z - z_t) \nabla v_k(z_t)  \leq
\gamma \big) \wedge
\big( -z_k  \leq 0 \big) \wedge
\big( \textstyle \sum_{k = 1}^p z_k - 1  = 0 \big), \quad \forall k\in[p],
\end{align}
where $z_0 \in \Delta$ is an arbitrary starting value. Then, $(z_t)_t$
is guaranteed to converge to a local minimum of Problem~\eqref{eq:opt}
\citep{YuilleRangarajan2003,SriperumbudurLanckriet2009}. Note that
Problem~\eqref{eq:dca} is a relatively simple optimization problem:
%$u_k(z)$ is a weighted sum of rational functions of $z$ (squared loss), or of the negative logarithm of an affine function of $z$ (cross-entropy loss), 
$u_k(z)$ is a weighted sum of the negative logarithm of an affine function of $z$, plus a weighted sum of rational functions of $z$ (squared loss),
and all other terms appearing in the constraints are
affine functions of $z$.

Problem~\eqref{eq:opt} seeks a parameter $z$ verifying
$\cL(\sD_k, h_z^\eta) - \cL(\sD_z, h_z^\eta) \leq \gamma$, for all
$k \in [p]$ for an arbitrarily small value of $\gamma$.  Since
$\cL(\sD_z, h_z^\eta) = \sum_{k = 1}^p z_k \cL(\sD_k, h_z^\eta)$ is a
weighted average of the expected losses $\cL(\sD_k, h_z^\eta)$,
$k \in [p]$, the solution $\gamma$ cannot be negative.  Furthermore,
by Lemma~\ref{lemma:brouwer}, a parameter $z$ verifying that
inequality exists for any $\gamma > 0$. Thus, the global solution
$\gamma$ of Problem~\eqref{eq:opt} must be close to zero. This provides
us with a simple criterion for testing the global optimality of the
solution $z$ we obtain using a DC-programming algorithm with a
starting parameter $z_0$.

\section{Experiments}
\label{sec:eval}

This section reports the results of our experiments with our
DC-programming algorithm for finding a robust domain generalization
solution when using squared loss and cross-entropy loss.  
We first evaluate our
algorithm using an artificial dataset assuming known densities where
we may compare our result to the global solution and found that indeed
our global objective approached the known optimum of zero (see
Appendix~\ref{app:more_exp} for more details).  Next, we evaluate our
DC-programming solution applied to real-world datasets: a sentiment
analysis dataset \citep{blitzer_acl07} for squared loss, 
a visual domain adaptation benchmark dataset \emph{Office} \citep{saenko_eccv10},
as well as a generalization of digit recognition task, for cross-entropy loss.

For all real-world datasets, the probability
distributions $\sD_k$ are not readily available to the
learner. However, Corollary~\ref{th:estimate} extends the learning
guarantees of our solution to the case where an estimate $\h \sD_k$ is
used in lieu of the ideal distribution $\sD_k$. Thus, we used standard density estimation 
methods to derive an
estimate $\h\sD_k$ for each $k\in [p]$. 
While density estimation can be a difficult task in general, for our purpose straightforward
techniques are sufficient for our predictor
$\h h_z^\eta$ to achieve a high performance, since
the approximate densities only serve to indicate the relative
importance of each source domain. We give full details of our density estimation procedure in Appendix~\ref{app:more_exp}.

 \begin{figure*}[t]
     \centering
     \subfloat{
         \includegraphics[width=.35\linewidth]{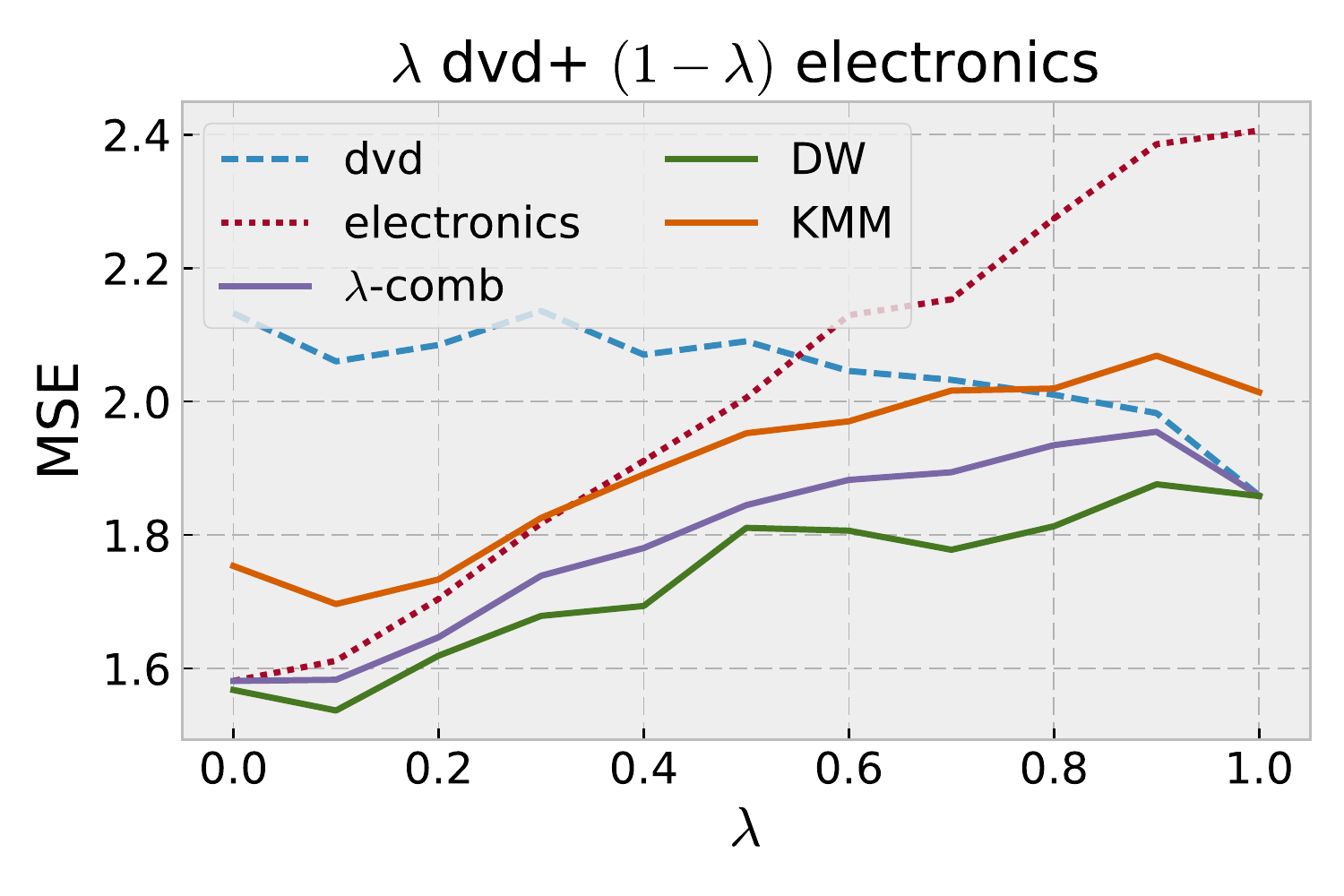}
         \label{fig:2domain_DE}
     }
     \hspace{2cm}%
     \subfloat{
         \includegraphics[width=.35\linewidth]{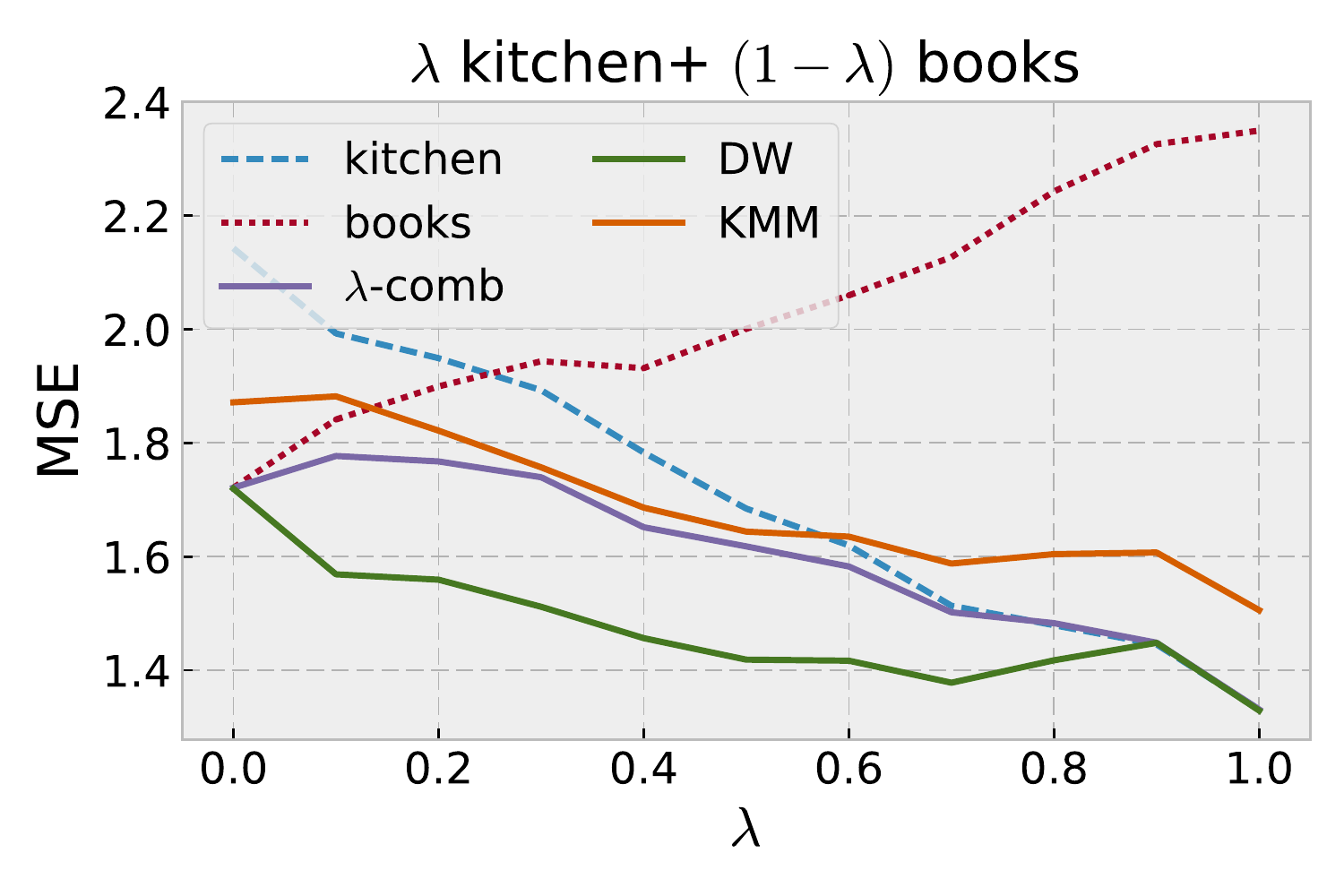}
         \label{fig:2domain_KB}
     }
     \caption{MSE sentiment analysis under mixture of two domains:
     (\emph{left}) dvd and electronics or (\emph{right}) kitchen and books.}
 \end{figure*} 
 
\subsection{Sentiment analysis task for squared loss}
We use the sentiment analysis dataset proposed
by \cite{blitzer_acl07} and used for multiple-source adaptation by
\cite{MansourMohriRostamizadeh2008, MansourMohriRostamizadeh2009}.
This dataset consists of product review text and rating labels taken
from four domains: \texttt{books} (B), \texttt{dvd} (D),
\texttt{electronics} (E), and $\texttt{kitchen}$ (K), with $2000$
samples for each domain. 
%We create 10 random splits of our data into a training set ($1600$ points) and test set ($400$ points) per domain.
%
We defined a vocabulary of $2\mathord,500$ words
that occur at least twice at the intersection of the four
domains. These words were used to define feature vectors, where every
sample is encoded by the number of occurrences of each word. We
trained our base hypotheses using support vector regression
(SVR)
%\footnote{We use the libsvm package implementation \texttt{http://www.csie.ntu.edu.tw/cjlin/libsvm/}}
   with same
hyper-parameters as in \citep{MansourMohriRostamizadeh2008,
  MansourMohriRostamizadeh2009}.
 
  \begin{table*}
  \begin{center}
  \caption{MSE on the sentiment analysis dataset of source-only
    baselines for each domain, \texttt{K},\texttt{D},
    \texttt{B},\texttt{E},
    the uniform weighted predictor
    \texttt{unif}, \texttt{KMM}, and the distribution-weighted method
    \texttt{DW} based on the learned $z$. \texttt{DW} outperforms all
    competing baselines. }
  \resizebox{\linewidth}{!}{

  \begin{tabular}{l ccccccccccc}
  \toprule
  & \multicolumn{9}{c}{Test Data}\\
  \cline{2-12}
  & \texttt{K}& \texttt{D} & \texttt{B}	& \texttt{E}	&\texttt{KD}	&\texttt{BE}	& \texttt{DBE} & \texttt{KBE} & \texttt{KDB} &	\texttt{KDB} &	\texttt{KDBE} \\
  \midrule
  \texttt{K}      &       {1.46$\pm$0.08} &       {2.20$\pm$0.14} &       {2.29$\pm$0.13} &       {1.69$\pm$0.12} &       {1.83$\pm$0.08} &       {1.99$\pm$0.10} &       {2.06$\pm$0.07}      &       {1.81$\pm$0.07} &       {1.78$\pm$0.07} &       {1.98$\pm$0.06} &       {1.91$\pm$0.06}\\
  \texttt{D}      &       {2.12$\pm$0.08} &       {1.78$\pm$0.08} &       {2.12$\pm$0.08} &       {2.10$\pm$0.07} &       {1.95$\pm$0.07} &       {2.11$\pm$0.07} &       {2.00$\pm$0.06}      &       {2.11$\pm$0.06} &       {2.00$\pm$0.06} &       {2.01$\pm$0.06} &       {2.03$\pm$0.06}\\
  \texttt{B}      &       {2.18$\pm$0.11} &       {2.01$\pm$0.09} &       {1.73$\pm$0.12} &       {2.24$\pm$0.07} &       {2.10$\pm$0.09} &       {1.99$\pm$0.08} &       {1.99$\pm$0.05}      &       {2.05$\pm$0.06} &       {2.14$\pm$0.06} &       {1.98$\pm$0.06} &       {2.04$\pm$0.05}\\
  \texttt{E}      &       {1.69$\pm$0.09} &       {2.31$\pm$0.12} &       {2.40$\pm$0.11} &       {1.50$\pm$0.06} &       {2.00$\pm$0.09} &       {1.95$\pm$0.07} &       {2.07$\pm$0.06}      &       {1.86$\pm$0.04} &       {1.84$\pm$0.06} &       {2.14$\pm$0.06} &       {1.98$\pm$0.05}\\
  \texttt{unif}   &       {1.62$\pm$0.05} &       {1.84$\pm$0.09} &       {1.86$\pm$0.09} &       {1.62$\pm$0.07} &       {1.73$\pm$0.06} &       {1.74$\pm$0.07} &       {1.77$\pm$0.05}      &       {1.70$\pm$0.05} &       {1.69$\pm$0.04} &       {1.77$\pm$0.04} &       {1.74$\pm$0.04}\\
  \texttt{KMM}    &       1.63$\pm$0.15   &       2.07$\pm$0.12   &       1.93$\pm$0.17   &       1.69$\pm$0.12   &       1.83$\pm$0.07   &       1.82$\pm$0.07   &       1.89$\pm$0.07&       1.75$\pm$0.07   &       1.78$\pm$0.06   &       1.86$\pm$0.09   &       1.82$\pm$0.06\\
  \texttt{DW}(ours)     &       {\bf1.45$\pm$0.08}      &       {\bf1.78$\pm$0.08}      &       {\bf1.72$\pm$0.12}      &       {\bf1.49$\pm$0.06}      &       {\bf1.62$\pm$0.07}      &   {\bf1.61$\pm$0.08}       &       {\bf1.66$\pm$0.05}      &       {\bf1.56$\pm$0.04}      &       {\bf1.58$\pm$0.05}      &       {\bf1.65$\pm$0.04}      &       {\bf1.61$\pm$0.04}\\
  \bottomrule
  \end{tabular}
  }
  	\label{table:sa}
  \end{center}

  \end{table*}

We compare our method (\texttt{DW}) against each
 source hypothesis, $h_k$.  We also compute a privileged baseline
 using the oracle $\lambda$ mixing parameter, \texttt{$\lambda$-comb}:
 $\sum_{k = 1}^p \lambda_k h_k$.  $\lambda$-\texttt{comb} is of course
 not accessible in practice since the target mixture $\lambda$ is not
 known to the user.  We also compare against a previously proposed
 domain adaptation algorithm
 \citep{HuangSmolaGrettonBorgwardtScholkopf2006} known as KMM.  It is important to note that the KMM model requires
 access to the unlabeled target data during adaptation and learns a
 new predictor for every target domain, while \texttt{DW} does not use any target data. Thus KMM operates in a favorable learning setting when compared to our solution.

We first considered the same test scenario as in
 \citep{MansourMohriRostamizadeh2008}, where the target is a mixture
 of two source domains.
 % For $\lambda$ varying from $0, 0.1, 0.2, \ldots, 1$, the test set consists of  $400 \cdot \lambda$ points from the first domain and  $400 \cdot (1 - \lambda)$ points from second domain. We used $1600$  training points for KMM ($800$ per domain), which coincides with the  sample size used to learn each base hypothesis, and used $50\%$ of  the test dataset for KMM adaptation. 
 The plots of
 Figures~\ref{fig:2domain_DE} and~\ref{fig:2domain_KB} report the
 results of our experiments. They show that our distribution-weighted
 predictor \texttt{DW} outperforms all baseline predictors despite the
 privileged learning scenarios of \texttt{$\lambda$-comb} and
 \texttt{KMM}. We didn't compare to the ``weighted''
 predictor in empirical studies by \cite{MansourMohriRostamizadeh2008}
 because it is not a real solution, but rather taking the unknown
 target mixture $\lambda$ as $z$ to compute $h_z$.

 Next, we compared the performance of \texttt{DW} with accessible
 baseline predictors on various target mixtures.  Since $\lambda$ is
 not accessible in practice, We replace \texttt{$\lambda$-comb} with
 the uniform combination of all hypotheses (\texttt{unif}),
 $\sum_{k = 1}^p h_k/p$.  
 %We used again $1600$ training points ($400$ per domain) and $50\%$ testing data for KMM algorithm.
 Table~\ref{table:sa} reports the mean and standard deviations of MSE
 over $10$ repetitions.  Each column corresponds to a different target
 test data source.  Our distribution-weighted method \texttt{DW}
 outperforms all baseline predictors across all test domains. Observe
 that, even when the target is a single source domain, our method
 successfully outperforms the predictor which is trained and tested on
 the same domain.  Results on more target mixtures are available
 in Appendix~\ref{app:more_exp}.

\begin{table}
    \parbox{.4\linewidth}{
        \centering
        \caption{\textbf{Digit} dataset statistics.} 
        \resizebox{\linewidth}{!}{
        \begin{tabular}{lccc}
            \toprule
            &SVHN & MNIST & USPS\\
            & \includegraphics[width=.1\linewidth]{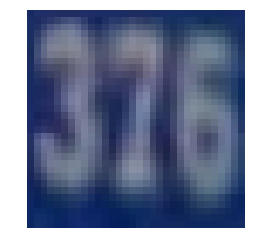}%
            & \includegraphics[width=.1\linewidth]{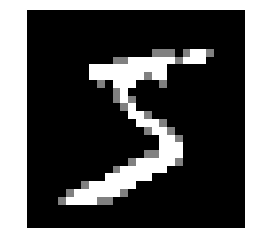}
            & \includegraphics[width=.1\linewidth]{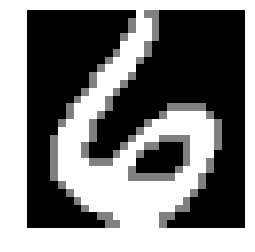}\\
            \midrule
            \# train images & 73,257& 60,000 & 7,291  \\
            \# test images & 26,032& 10,000 & 2,007\\
            image size & 32x32 & 28x28 & 16x16\\
            color & rgb & gray & gray\\
            \bottomrule
        \end{tabular}
    }
    \label{table:digit-stats}
    }
    \hfill
    \parbox{.55\linewidth}{
        \centering
        \caption{\textbf{Digit} dataset accuracy: We report accuracy across six possible test domains.} %Our method outperforms each single-source domain model and a uniform combination of the source domain models. %Our method performs comparably to a ConvNet jointly trained on all source domains when evaluated across various target mixtures.}
        \resizebox{\linewidth}{!}{%
            \begin{tabular}{l cccccccg}
                \toprule
                & \multicolumn{8}{c}{Test Data}\\
                \cline{2-8}
                & \texttt{svhn}& \texttt{mnist}  & \texttt{usps} & \texttt{mu} & \texttt{su} & \texttt{sm} & \texttt{smu} & mean\\
                \midrule
                CNN-\texttt{s}  &      \bf {92.3} &       {66.9} &       {65.6} &       {66.7} &       {90.4} &       {85.2} &       {84.2} &       {78.8}\\
                CNN-\texttt{m}  &       {15.7} &      \bf {99.2} &       {79.7} &       {96.0} &       {20.3} &       {38.9} &       {41.0} &       {55.8}\\
                CNN-\texttt{u}  &       {16.7} &       {62.3} &      \bf {96.6} &       {68.1} &       {22.5} &       {29.4} &       {32.9} &       {46.9}\\
                CNN-\texttt{unif}       &       {75.7} &       {91.3} &       {92.2} &       {91.4} &       {76.9} &       {80.0} &       {80.7} &       {84.0}\\
                \texttt{DW} (ours)      &       {91.4} &       {98.8} &       {95.6} &       {98.3} &       \bf{91.7} &       \bf{93.5} &       \bf{93.6} &       \bf{94.7}\\
                \midrule
                CNN-\text{joint}        &       {90.9} &       {99.1} &       {96.0} &       \bf {98.6} &       {91.3} &       {93.2} &       {93.3} &       {94.6}\\            
                \bottomrule
            \end{tabular}
        }
        \label{table:digits}
    }
\end{table}

%-------		Office Table 		--------%
\begin{table*}[h!]
    \caption{\textbf{\emph{Office}} dataset accuracy: We report accuracy across six possible test domains. We show performance all baselines:  CNN-\texttt{a,w,d},  CNN-\texttt{unif},  \texttt{DW} based on the learned $z$, and the jointly trained model CNN-\texttt{joint}.  \texttt{DW} outperforms all competing models.}
\begin{center}
    \resizebox{\linewidth}{!}{
        \begin{tabular}{l cccccccg}
            \toprule
            & \multicolumn{8}{c}{Test Data}\\
            \cline{2-8}

            & \texttt{amazon}& \texttt{webcam} 	& \texttt{dslr}	& \texttt{aw} & \texttt{ad} & \texttt{wd} & \texttt{awd} & mean\\
            \midrule
            CNN-\texttt{a} & 	{\bf 75.7 $\pm$ 0.3} & 	53.8 $\pm$ 0.7 & 	53.4 $\pm$ 1.3 & 	71.4 $\pm$ 0.3 & 	73.5 $\pm$ 0.2 & 	53.6 $\pm$ 0.8 & 	69.9 $\pm$ 0.3 & 	64.5 $\pm$ 0.6 \\
            CNN-\texttt{w} & 	45.3 $\pm$ 0.5 & 	91.1 $\pm$ 0.8 & 	91.7 $\pm$ 1.2 & 	54.4 $\pm$ 0.5 & 	50.0 $\pm$ 0.5 & 	91.3 $\pm$ 0.8 & 	57.5 $\pm$ 0.4 & 	68.8 $\pm$ 0.7 \\
            CNN-\texttt{d} & 	50.4 $\pm$ 0.4 & 	89.6 $\pm$ 0.9 & 	90.9 $\pm$ 0.8 & 	58.3 $\pm$ 0.4 & 	54.6 $\pm$ 0.4 & 	90.0 $\pm$ 0.7 & 	61.0 $\pm$ 0.4 & 	70.7 $\pm$ 0.6 \\
            %			CNN-$z$ &	69.60 $\pm$ 0.5 & 	91.23 $\pm$ 0.7 & 	90.36 $\pm$ 1.0 & 	73.91 $\pm$ 0.5 & 	71.74 $\pm$ 0.5 & 	90.95 $\pm$ 0.6 & 	75.29 $\pm$ 0.5 & 	80.44 $\pm$ 0.6 \\
            CNN-\texttt{unif} &	69.7 $\pm$ 0.3 & 	93.1 $\pm$ 0.6 & 	93.2 $\pm$ 0.9 & 	74.4 $\pm$ 0.4 & 	72.1 $\pm$ 0.3 & 	93.1 $\pm$ 0.5 & 	75.9 $\pm$ 0.3 & 	81.6 $\pm$ 0.5 \\
            \texttt{DW} (ours) & 	75.2 $\pm$ 0.4 & 	{\bf 93.7 $\pm$ 0.6} & 	{\bf 94.0 $\pm$ 1.0} & 	{\bf 78.9 $\pm$ 0.4} & 	{\bf 77.2 $\pm$ 0.4} & 	{\bf 93.8 $\pm$ 0.6} & 	{\bf 80.2 $\pm$ 0.3} & 	{\bf 84.7 $\pm$ 0.5} \\
            \midrule
            CNN-\texttt{joint} &	72.1 $\pm$ 0.3 & 	\underline{93.7 $\pm$ 0.5} & 	\underline{93.7 $\pm$ 0.5} & 	76.4 $\pm$ 0.4 & 	76.4 $\pm$ 0.4 & 	93.7 $\pm$ 0.5 & 	79.3 $\pm$ 0.4 & 	83.6 $\pm$ 0.4 \\
            \bottomrule
        \end{tabular}
    }
    \label{table:office}
\end{center}
\end{table*}

\subsection{Recognition tasks for cross-entropy loss}

We consider two real-world domain adaptation tasks:
a generalization of digit recognition task,
and a standard visual adaptation \emph{Office} dataset.

For each individual domain, we train a convolutional neural network (CNN)
and use the output from the softmax score layer as
our base predictors $h_k$.  We compute the uniformly weighted
combination of source predictors, $h_{\texttt{unif}} = \sum_{k=1}^p h_k /p$.
As a privileged baseline, we also train a model on all source data combined, $h_{\texttt{joint}}$.
Note, this approach is often not feasible if independent entities contribute classifiers and densities, but not full training datasets.
Thus this approach is not consistent with our scenario, and it operates in a much more favorable learning setting than our solution.
%When training the joint model, we take care to appropriately subsample the data such that mini-batch has equal representation from each domain.
Finally, our distribution weighted predictor \texttt{DW} is computed with $h_k$s, density estimates, and our learned weighting, $z$.
%To classify test images, given any predictor $h(x,y)$ we predict the label as the category that maximizes $h(x,y)$. $ y_{\text{pred}} = \argmax_{y\in \cY}  h(x, y) $.  
Our baselines then consists of the classifiers
from $h_k$, $h_\texttt{unif}$, $h_\texttt{joint}$, and \texttt{DW}.

We begin our study with a generalization of digit recognition task, which consists of three digit recognition
datasets: Google Street View House Numbers
(SVHN), MNIST, and
USPS. Dataset statistics as well as example images can be found in
Table~\ref{table:digit-stats}. 
We train the ConvNet (or CNN) architecture following
~\cite{taigman_iclr17} as our source models and joint model. We use the second
fully-connected layer's output as our features for density estimation, 
and the output from the softmax score layer as our predictors. We use the full training
sets per domain to learn the source model and densities. 
Note, these steps are completely isolated from one another and may be performed by
unique entities and in parallel. Finally, for our DC-programming
algorithm we use small subset of 200 real image-label pairs from each domain to learn
the parameter $z$.

Our next experiment uses the standard visual adaptation \emph{Office} dataset,
which has 3 domains: \texttt{amazon}, \texttt{webcam}, and
\texttt{dslr}.  The dataset contains 31 recognition categories of objects commonly
found in an office environment. There are 4110 images total with 2817
from \texttt{amazon}, 795 from \texttt{webcam}, and 498 from
\texttt{dslr}.
We follow the standard protocol from
\cite{saenko_eccv10}, whereby 20 labeled examples are available for
training from the \texttt{amazon} domain and 8 labeled examples are
available from both the \texttt{webcam} and \texttt{dslr} domains. The
remaining examples from each domain are used for testing.
We use the AlexNet~\cite{supervision} ConvNet (CNN) architecture,
and use the output from softmax score layer as our base
predictors, pre-trained on ImageNet and use fc7
activations as our features for density estimation \cite{decaf}.

We report the performance of our method and that of baselines on the
digit recognition dataset in Table~\ref{table:digits}, 
and report the performance on the \emph{Office} dataset in Table~\ref{table:office}.
On both datasets, we evaluate on various test distributions: each individual domain, the combination of
each two domains and the fully combined set.
When the test distribution equals one of the source
distributions, our distribution-weighted classifier successfully
outperforms (\texttt{webcam,dslr}) or maintains performance of
the classifier which is trained and tested on the same domain. 
For the more realistic scenario where the target domain is a mixture of any
two or all three source domains, the performance of our method is
comparable or marginally superior to that of the jointly trained
network, despite the fact that we do not retrain any network
parameters in our method and that we only use a small number of
per-domain examples to learn the distribution weights -- an
optimization which may be solved on a single CPU in a matter of
seconds for this problem. This again demonstrates
the robustness of our distribution-weighted combined classifier to a
varying target domain.

\section{Conclusion}
\label{sec:conclusion}
We presented practically applicable multiple-source domain adaptation
algorithms for the  cross-entropy loss and other similar losses. These algorithms
benefit from very favorable theoretical guarantees that we extended to the stochastic setting. Our empirical results further demonstrate
empirically their effectiveness and their
importance in adaptation problems.

\bibliography{madap}
{\small
\bibliographystyle{abbrvnat}}

\clearpage
\appendix

\clearpage
\section{Lower bounds for convex combination rules}
\label{app:lower_bounds}

In this section, we give lower bounds for convex combination rule, for
both squared loss and cross-entropy loss. For any $\alpha \in \Delta$, define
the convex combination rule for the {regression} and the
{probability} model as follows:
\begin{align}
g_\alpha(x)
  & = \sum_{k = 1}^p \alpha_k h_k(x), 
  & \text{(\emph{R})}\\
 g_\alpha(x, y) &= \sum_{k = 1}^p \alpha_k h_k(x, y).
  & \text{(\emph{P})}
\end{align}

\begin{lemma}[\emph{Regression model}, squared loss]
\label{lemma:lower_reg}
There is a mixture adaptation problem for which the expected squared loss of 
$g_\alpha$ is $\frac{1}{4}$.
\end{lemma}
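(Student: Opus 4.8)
The plan is to exhibit an explicit two-source mixture problem ($p = 2$) on a two-point input space and show that a single balanced mixture of the two sources defeats every convex combination simultaneously. Take $\cX = \{a, b\}$, $\cY = \Rset$, and let $\sD_1$ place all of its mass on $a$ and $\sD_2$ place all of its mass on $b$, each with deterministic label $0$; that is, $\sD_1(a, 0) = 1$ and $\sD_2(b, 0) = 1$. Define the source predictors to be exact on their own domain but maximally wrong on the other source's support point:
\[
h_1(a) = 0, \quad h_1(b) = 1, \qquad h_2(a) = 1, \quad h_2(b) = 0.
\]
Then $\cL(\sD_1, h_1) = (h_1(a) - 0)^2 = 0$ and $\cL(\sD_2, h_2) = (h_2(b) - 0)^2 = 0$, so the accuracy assumption of Section~\ref{sec:setup} holds with $\e = 0$ (each $h_k$ is a perfect predictor on its own source), and the losses are trivially bounded.

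First I would evaluate the convex rule $g_\alpha = \alpha h_1 + (1 - \alpha) h_2$, $\alpha \in [0,1]$, at the two inputs: $g_\alpha(a) = 1 - \alpha$ and $g_\alpha(b) = \alpha$. Next I would take the target to be the balanced mixture $\sD_T = \tfrac12 \sD_1 + \tfrac12 \sD_2$, which is a legitimate mixture of the sources and assigns mass $\tfrac12$ to each of $(a, 0)$ and $(b, 0)$. Its expected squared loss under $g_\alpha$ is
\[
\cL(\sD_T, g_\alpha) = \tfrac12 (1 - \alpha)^2 + \tfrac12 \alpha^2.
\]
The final step is the elementary identity $(1 - \alpha)^2 + \alpha^2 = 2\bigl(\alpha - \tfrac12\bigr)^2 + \tfrac12 \geq \tfrac12$, which gives $\cL(\sD_T, g_\alpha) \geq \tfrac14$ for every $\alpha \in [0,1]$, with equality at $\alpha = \tfrac12$. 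Since this holds for all convex weights against a single fixed target mixture, no convex combination rule can drive the loss below $\tfrac14$ on this problem, whereas Theorem~\ref{th:distinct_mixture} guarantees the distribution-weighted rule $h_z^\eta$ achieves loss arbitrarily close to $\e = 0$.

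I do not expect a genuine obstacle here: the content lies entirely in the choice of construction rather than in any difficult estimate. The one point requiring care is making the construction display the full qualitative gap, namely that the source predictors are \emph{perfect} on their own source (so $\e$ can be taken as small as desired, here exactly $0$) while a \emph{single} target mixture simultaneously forces loss $\geq \tfrac14$ for \emph{every} $\alpha$. The symmetric ``swapped'' predictors together with the balanced target achieve exactly this, and the quadratic lower bound is immediate. For the stronger min--max reading of the statement I would additionally note that $\max_{\lambda \in [0,1]} \bigl[\lambda (1 - \alpha)^2 + (1 - \lambda)\alpha^2\bigr] = \max\{(1 - \alpha)^2, \alpha^2\}$, whose minimum over $\alpha$ is again $\tfrac14$, attained at $\alpha = \tfrac12$.
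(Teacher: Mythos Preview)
Your proposal is correct and follows essentially the same approach as the paper: a two-point, two-source construction with the balanced target mixture, reducing to the inequality $\tfrac12(1-\alpha)^2 + \tfrac12\alpha^2 \geq \tfrac14$. The paper's construction differs only cosmetically---it uses constant predictors $h_0\equiv 0$, $h_1\equiv 1$ with distinct labels $0$ and $1$ at $a$ and $b$ (rather than your swapped predictors with common label $0$) and bounds the loss via Jensen's inequality on $|g_\alpha - y|$ instead of your direct quadratic identity.
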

\begin{proof}
Let $\cX = \{ a,b\}$, and $\cY=\{0,1\}$.
Consider $\scrD_0(x,y) = 1_{x=a,y=0}$, and 
$h_0(x) = 0$; 
 $\scrD_1(x,y) = 1_{x=b,y=1}$, and 
$h_1(x) = 1$.
Consider the target distribution $\sD_T = \frac{1}{2} \sD_0 + \frac{1}{2}\sD_1$. Then, for any convex combination rule $g_\alpha = \alpha h_0 + (1- \alpha) h_1 = 1- \alpha$, 
\begin{align*}
\left(\frac{1}{2}\right)^2 &= \left(\frac{1}{2} \alpha + 
\frac{1}{2}(1-\alpha)\right)^2
 = \Bigg(\sum_{(x,y)\in \cX\times \cY} \sD_T(x,y) |g_\alpha(x) - y| \Bigg)^2 \\
& \le \sum_{(x,y)\in \cX\times\cY} \sD_T(x,y) (g_\alpha(x)-y)^2  = \cL(\sD_T,g_\alpha).
\end{align*}
\end{proof}
Note that the hypotheses $h_0$ and $h_1$ have \emph{zero} error on their own domain, i.e. $\e=0$. However,
\emph{no} convex combination
rule will perform well on the target distribution $\sD_T$.

\begin{lemma}[\emph{Probability Model}, cross-entropy loss]
\label{lemma:lower_log}
There is a mixture adaptation problem for which the expected cross-entropy loss of 
$g_\alpha$ is $\log(p)$.
\end{lemma}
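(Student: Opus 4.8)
The plan is to mirror the construction used in the proof of Lemma~\ref{lemma:lower_reg}, adapting it to the probability model. First I would take $\cX = \cY = [p]$ and, for each $k \in [p]$, let the source domain be the point mass $\scrD_k(x,y) = \1_{x = k,\, y = k}$ and let the source predictor be the deterministic conditional $h_k(x,y) = \1_{y = k}$. Each $h_k$ is a legitimate normalized hypothesis, since $\sum_{y} h_k(x,y) = 1$ for every $x$, and it is \emph{exact} on its own domain: $\cL(\scrD_k, h_k) = -\log h_k(k,k) = -\log 1 = 0$, so $\e = 0$.

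Next I would take the target to be the uniform mixture $\sD_T = \frac1p \sum_{k=1}^p \scrD_k$, whose support is the diagonal $\{(k,k) : k \in [p]\}$, each atom carrying mass $1/p$. For any $\alpha \in \Delta$ the convex combination collapses to $g_\alpha(x,y) = \sum_{k} \alpha_k \1_{y=k} = \alpha_y$, independent of $x$. Evaluating the cross-entropy loss on the diagonal then gives
\begin{equation*}
\cL(\sD_T, g_\alpha) = \sum_{k=1}^p \frac1p \big(-\log g_\alpha(k,k)\big) = -\frac1p \sum_{k=1}^p \log \alpha_k.
\end{equation*}

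The final step is to lower-bound this quantity uniformly in $\alpha$. By Jensen's inequality applied to the concave function $\log$, we have $\frac1p \sum_k \log \alpha_k \le \log\big(\frac1p \sum_k \alpha_k\big) = \log(1/p)$, hence $\cL(\sD_T, g_\alpha) \ge \log p$ for every $\alpha \in \Delta$, with equality at $\alpha = (1/p, \ldots, 1/p)$. Thus, even though each $h_k$ has zero loss on its own domain, no convex combination can achieve loss below $\log p$ on the target, which establishes the claim.

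I do not anticipate a genuine obstacle here: the argument is elementary once the construction is fixed. The only point requiring care is choosing the sources and predictors so that each $h_k$ is simultaneously a valid normalized conditional distribution and exact on its own domain, while forcing any $g_\alpha$ to spread its mass $\alpha_k$ across the $p$ labels on the diagonal support of $\sD_T$; the resulting bound is then just an instance of the Gibbs/Jensen inequality.
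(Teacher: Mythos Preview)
Your construction is identical to the paper's: $\cX=\cY=[p]$, $\scrD_k$ the point mass at $(k,k)$, and $h_k(x,y)=\1_{y=k}$. The only difference lies in the concluding inequality. The paper takes an \emph{adversarial} target and computes $\max_{\lambda\in\Delta}\cL(\scrD_\lambda,g_\alpha)=\max_{k}\big(-\log\alpha_k\big)$, then argues $\min_{\alpha\in\Delta}\max_k(-\log\alpha_k)=\log p$. You instead fix the uniform target $\sD_T=\tfrac1p\sum_k\scrD_k$ and invoke Jensen to get $-\tfrac1p\sum_k\log\alpha_k\ge\log p$. Both are correct and elementary; your version is slightly tidier in that it exhibits a \emph{single} target mixture defeating every $g_\alpha$ (mirroring the fixed-target argument in Lemma~\ref{lemma:lower_reg}), whereas the paper's min--max phrasing only guarantees that for each $\alpha$ some target is bad.
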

\begin{proof}
Let $\cX = \{ x_1,\dots,x_k\}$, and $\cY=\{ y_1,\dots,y_k\}$.
Consider $\scrD_k(x,y) = 1_{x=x_k,y=y_k}$, and 
$h_k(x,y) = 1_{y=y_k}$.
 Consider the largest 
cross-entropy loss of $g_\alpha$ on any target mixture $\scrD_\lambda(x,y)$:
 \begin{align*}
\max_{\lambda \in \Delta} \cL(\scrD_\lambda, g_\alpha)
& =  \max_{\lambda \in \Delta} \sum_{k = 1}^p -\lambda_k \log(\alpha_k) 
 =  \max_{k\in [p]} \left[ -\log (\alpha_k)\right].
 \end{align*}
 Choosing $\alpha \in \Delta$ to minimize that adversarial loss gives
 \begin{equation*}
  \min_{\alpha \in \Delta} \max_{k\in [p]} \left[-\log(\alpha_k)\right]=\log(p).
 \end{equation*}
 Therefore any convex combination rule $g_\alpha$ 
 incurs at least a loss of $\log(p)$.
  \end{proof}
Again, the base hypotheses $h_k$s have \emph{zero} error on their own domain, yet there is no convex combination rule that is robust against any target mixture.

\clearpage
\section{Theoretical analysis for the stochastic scenario}
\label{app:theory}
In this section, we give a series of theoretical results for the
general stochastic scenario with their full proofs.  
We will separate the proofs for the regression model (Appendix~\ref{app:reg_theory}) and the probability model (Appendix~\ref{app:prob_theory}), since the definitions of the distribution weighted combination are different in the two models.

\subsection{Regression model}
\label{app:reg_theory}
The proofs for the regression model (R) are presented in the following order: 
we first assume the conditional probabilities are the same across source domains, and prove Lemma~\ref{lemma:brouwer}; using
that, we prove Corollary~\ref{th:mixture} and Corollary~\ref{th:estimate}. Finally, we relax the assumption of same conditionals, and prove Theorem~\ref{th:distinct}, which a stronger version of Theorem~\ref{th:distinct_mixture}.

Our proofs make use of the following Fixed-Point Theorem of Brouwer.
\begin{theorem}
  For any compact and convex non-empty set $C \subset \Rset^p$ and any
  continuous function $f\colon C\rightarrow C$, there is a point
  $x \in C$ such that $f(x) = x$.
\end{theorem}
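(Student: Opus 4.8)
The plan is to reduce the general statement to the case of the standard $n$-simplex and then settle that case by a purely combinatorial argument based on Sperner's Lemma. First I would observe that the fixed-point property is a topological invariant: any nonempty compact convex set $C\subset\Rset^p$ of dimension $d$ is homeomorphic to the standard simplex $\Delta^d$ via some homeomorphism $\phi\colon C\to\Delta^d$, and if the induced map $g=\phi\circ f\circ\phi^{-1}\colon\Delta^d\to\Delta^d$ has a fixed point $w$, then $\phi^{-1}(w)$ is a fixed point of $f$. Hence it suffices to prove the theorem when $C=\Delta^n=\{x\in\Rset^{n+1}_{\ge 0}\colon\sum_i x_i=1\}$ and $f\colon\Delta^n\to\Delta^n$ is continuous.

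Next I would invoke Sperner's Lemma. Fix a triangulation $T$ of $\Delta^n$ and a labeling $\ell$ of its vertices by $\{0,1,\dots,n\}$ that is proper in the Sperner sense: every vertex lying on the face spanned by $\{e_i\colon i\in S\}$ receives a label in $S$. Sperner's Lemma asserts that the number of fully labeled (rainbow) cells of $T$ --- those whose $n+1$ vertices carry all $n+1$ distinct labels --- is odd, and in particular nonzero. I would prove this by induction on $n$ using the standard parity/door-counting argument: count, modulo $2$, the pairs consisting of a cell together with an $(n-1)$-face of that cell carrying labels $\{0,\dots,n-1\}$, noting that each interior $(n-1)$-face is shared by exactly two cells while the boundary faces with labels $\{0,\dots,n-1\}$ all lie on the facet $x_n=0$, to which the inductive hypothesis applies.

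With Sperner's Lemma in hand, I would build the labeling from $f$. For $x\in\Delta^n$ define $\ell(x)$ to be some index $i$ with $x_i>0$ and $f(x)_i\le x_i$; such an $i$ exists because $\sum_i f(x)_i=\sum_i x_i=1$ forces at least one positive coordinate not to increase. This labeling is automatically proper, since any vertex on the face spanned by $\{e_i\colon i\in S\}$ has $x_i=0$ for $i\notin S$. Thus for each triangulation of mesh $1/m$ there is a rainbow cell whose vertex $x^{(i,m)}$ carrying label $i$ satisfies $f(x^{(i,m)})_i\le x^{(i,m)}_i$. Letting $m\to\infty$ and extracting a convergent subsequence by compactness of $\Delta^n$, all vertices of the shrinking rainbow cells converge to a common limit $x^\star$, and continuity of $f$ gives $f(x^\star)_i\le x^\star_i$ for every $i$. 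Summing over $i$ shows both sides equal $1$, so every inequality is an equality and $f(x^\star)=x^\star$.

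The main obstacle is the combinatorial heart of the argument: proving Sperner's Lemma cleanly, where the inductive parity count must track boundary behaviour carefully, and verifying that the $f$-induced labeling is genuinely Sperner-proper on every face, which is exactly what makes the limit argument deliver an actual fixed point rather than a mere approximate one. The reduction step and the final compactness passage are routine by comparison.
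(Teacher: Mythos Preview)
Your proof outline via Sperner's Lemma is correct and follows the classical Knaster--Kuratowski--Mazurkiewicz route to Brouwer's theorem; the reduction to the simplex, the construction of the Sperner labeling from the coordinate-nonincrease condition, and the compactness passage are all sound.

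However, there is nothing to compare against: the paper does not prove this statement. Brouwer's Fixed-Point Theorem is stated in Appendix~B as a classical tool and is then invoked, without proof, inside the argument for Lemma~\ref{lemma:brouwer} (the existence of a good mixture weight $z$). The authors simply apply it to the continuous map $\Phi\colon\Delta\to\Delta$ defined by $[\Phi(z)]_k = \big(z_k\,\cL(\sD_k,h_z^\eta)+\eta'/p\big)\big/\big(\sum_j z_j\,\cL(\sD_j,h_z^\eta)+\eta'\big)$. So your proposal goes well beyond what the paper does: you supply a self-contained proof of a result the paper takes for granted. If the intent of the exercise is to match the paper's own proof of this theorem, be aware that no such proof exists there; if the intent is to justify the theorem independently, your Sperner-based argument is a perfectly standard and acceptable way to do it.
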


\begin{replemma}{lemma:brouwer}
For any $\eta, \eta' > 0$, there exists $z \in \Delta$, with $z_k \neq
0$ for all $k \in [p]$, such that the following holds for the
distribution-weighted combining rule $h_z^\eta$:
\begin{equation}
\forall k \in [p], \quad \cL(\sD_k, h_z^\eta) \leq \sum_{j = 1}^p z_j \cL(\sD_j, h_z^\eta)  + \eta'.
\end{equation}
\end{replemma}

\begin{proof}
  Consider the mapping $\Phi \colon \Delta \to \Delta$ defined for all
  $z \in \Delta$ by
\begin{equation*} 
[\Phi(z)]_k = \frac{z_k \, \cL(\sD_k, h_z^\eta) +
      \frac{\eta'}{p} }{\sum_{j = 1}^p z_j \cL(\sD_j, h_z^\eta) + \eta'}.
\end{equation*}
$\Phi$ is continuous since $\cL(\sD_k, h_z^\eta)$ is a continuous
function of $z$ and since the denominator is positive ($\eta' > 0$).
Thus, by Brouwer's Fixed Point Theorem, there exists $z \in \Delta$
such that $\Phi(z) = z$. For that $z$, we can write
\begin{equation*}
z_k = \frac{z_k \, \cL(\sD_k, h_z^\eta) +
      \frac{\eta'}{p} }{\sum_{j = 1}^p z_j \cL(\sD_j, h_z^\eta) + \eta'},
\end{equation*}
for all $k \in [p]$.  Since $\eta'$ is positive, we must have $z_k
\neq 0$ for all $k$. Dividing both sides by $z_k$ gives $\cL(\sD_k,
h_z^\eta) = \sum_{j = 1}^p z_j \cL(\sD_j, h_z^\eta) + \eta' -
\frac{\eta'}{p z_k} \leq \sum_{j = 1}^p z_j \cL(\sD_j, h_z^\eta) +
\eta'$, which completes the proof.
\end{proof}

\begin{repcorollary}{th:mixture}
Assume the conditional probability $\sD_k(y|x)$ does not depend on $k$. Let $\sD_\lambda$ be an arbitrary mixture of source domains,   $\lambda \in \Delta$. For any $\delta > 0$, there exists $\eta > 0$ and $z\in \Delta$, such
that $\cL(\sD_\lambda,h_z^\eta)\leq \e +\delta$.
\end{repcorollary}

\begin{proof}
  We first upper bound, for an arbitrary $z \in \Delta$, the expected
  loss of $h_z^\eta$ with respect to the mixture distribution $\sD_z$
  defined using the same $z$, that is
  $\cL(\sD_z, h_z^\eta) = \sum_{k = 1}^p z_k \cL(\sD_k, h_z^\eta)$.
  By definition of $h_z^\eta$ and $\sD_z$, we can write
\begin{align*}
     \cL(\sD_z, h_z^\eta) & = \sum_{(x, y) } \sD_z(x,y) L(h_z^\eta(x), y)\\
    & = \sum_{(x, y)} \sD_z(x, y) L \mspace{-2mu} \left(\mspace{-1mu}
  \sum_{k = 1}^p \frac{z_k
    \msD_k(x) + \eta  \frac{\msU(x)}{p}}{\msD_z(x) + \eta
    \msU(x)} h_k(x), y \right) \mspace{-4mu}.
\end{align*}
By convexity of $L$, this implies that
\begin{align*}
     \cL(\sD_z, h_z^\eta) 
    & \leq \sum_{(x, y)} \sD_z(x, y) \sum_{k = 1}^p \frac{z_k
\msD_k(x) + \eta  \frac{\msU(x)}{p}}{\msD_z(x) + \eta
\msU(x)} L\big(h_k(x), y \big)\\
& \leq \sum_{(x, y)} \sD_z(y | x) \msD_z(x) \sum_{k = 1}^p \frac{z_k
    \msD_k(x) + \eta  \frac{\msU(x)}{p}}{\msD_z(x) + \eta
    \msU(x)} L\big(h_k(x), y \big)\\
    & \leq \sum_{(x, y)} \sD_z(y | x) \sum_{k = 1}^p \bigg( z_k
    \msD_k(x) + \eta  \frac{\msU(x)}{p} \bigg) L\big(h_k(x), y \big).
\end{align*}
Next, observe that
$\sD_{z}(y|x)=\sum_{k = 1}^{p}\frac{z_{k}\msD_{k}(x)}{\msD_z(x)}
\sD_{k}(y|x)=\sD_k(y|x)$ for any $k \in [p]$ 
since by assumption $\sD_{k}(y | x)$ does not
depend on $k$. Thus,
\begin{align*}
    \cL(\sD_z, h_z^\eta)
& \leq \sum_{(x, y)} \sD_z(y | x) \sum_{k = 1}^p \bigg( z_k
    \msD_k(x) + \eta  \frac{\msU(x)}{p} \bigg) L\big(h_k(x), y \big)\\
    &  =  \sum_{(x, y)} \mspace{-2mu}\sum_{k = 1}^p \bigg(\mspace{-2mu} z_k
    \sD_k(x,y) + \eta  \sD_k(y | x) \frac{\msU(x)}{p} \mspace{-2mu}\bigg)
  \mspace{-2mu} L\big(h_k(x), y \big)\\
  & = \sum_{k = 1}^p z_k \cL(\sD_k, h_k) + \frac{\eta}{p} \sum_{k = 1}^p \sum_{(x,
  y)} \sD_k(y | x) \msU(x)
  L\big(h_k(x), y \big)\\
  &  \leq  \sum_{k = 1}^p z_k \cL(\sD_k, h_k) + \eta M 
   \leq  \sum_{k = 1}^p z_k \e + \eta M = \e + \eta M.
\end{align*}
Now, choose $z \in \Delta$ as in the statement of
Lemma~\ref{lemma:brouwer}.
Then,
the following holds for any mixture distribution $\sD_\lambda$:
\begin{align*}
\cL(\sD_\lambda,h_z^\eta)
&= \sum_{k = 1}^p \lambda_k \cL(\sD_k,h_z^\eta) 
 \leq \sum_{k = 1}^p \lambda_k \big( \cL(\sD_z,h_z^\eta) +\eta' \big)\\
 & = \cL(\sD_z,h_z^\eta) +\eta'
 \leq \e + \eta M + \eta'.
\end{align*}
Setting $\eta = \frac{\delta}{2M}$ and $\eta' = \frac{\delta}{2}$
concludes the proof.
\end{proof}

Next, we introduce a useful Corollary and give its proof.

\begin{corollary}\label{th:arbitrary}
Let $\sD_T$ be an arbitrary target distribution.
For any $\delta > 0$, there exists
$\eta > 0$ and $z \in \Delta$, such that
the following inequality holds for any $\alpha > 1$:
\begin{equation*}
\cL(\sD_T, h_z^\eta) 
\leq \Big[(\e + \delta) \, \sfd_\alpha(\sD_T \parallel \cD)
\Big]^{\frac{\alpha - 1}{\alpha}} M^{\frac{1}{\alpha }}.
\end{equation*}
\end{corollary}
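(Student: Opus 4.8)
The plan is to reduce this bound for an arbitrary target $\sD_T$ to the mixture guarantee of Corollary~\ref{th:mixture} by a single application of H\"older's inequality, letting the R\'enyi divergence emerge from the H\"older split. First I would select the mixture $\sD_{\lambda^*} \in \cD$ attaining the infimum $\sfd_\alpha(\sD_T \parallel \cD) = \inf_{\sD'\in\cD}\sfd_\alpha(\sD_T\parallel \sD')$. Since $\Delta$ is compact and $\lambda \mapsto \sfd_\alpha(\sD_T \parallel \sD_\lambda)$ is continuous, this infimum is attained; and if it is $+\infty$ (i.e.\ $\sD_{\lambda}(x,y)=0$ while $\sD_T(x,y)>0$ for every mixture) the claimed inequality holds trivially, so I may assume it is finite. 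I would then fix $\eta>0$ and $z\in\Delta$ as furnished by Corollary~\ref{th:mixture} for the given $\delta$; crucially, that corollary provides a single pair $(z,\eta)$ whose guarantee $\cL(\sD_\lambda,h_z^\eta)\le \e+\delta$ holds \emph{simultaneously} for every mixture $\lambda\in\Delta$, so in particular for $\lambda^*$.

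The core step is the H\"older split. Writing $\cL(\sD_T,h_z^\eta) = \sum_{(x,y)} \sD_T(x,y)\,L(h_z^\eta,x,y)$, I would factor each summand as
\[
\sD_T(x,y)\,L(h_z^\eta,x,y)
= \frac{\sD_T(x,y)}{\sD_{\lambda^*}(x,y)^{\frac{\alpha-1}{\alpha}}}
\cdot \sD_{\lambda^*}(x,y)^{\frac{\alpha-1}{\alpha}}\,L(h_z^\eta,x,y),
\]
and apply H\"older with the conjugate exponents $\alpha$ and $\frac{\alpha}{\alpha-1}$ (note $\tfrac1\alpha+\tfrac{\alpha-1}{\alpha}=1$). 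The first factor raised to the $\alpha$ power and summed is exactly $\sum_{(x,y)}\sD_T(x,y)^\alpha \sD_{\lambda^*}(x,y)^{1-\alpha} = \sfd_\alpha(\sD_T\parallel\sD_{\lambda^*})^{\alpha-1}$, whose $\tfrac1\alpha$ power yields the divergence term $\sfd_\alpha(\sD_T\parallel\sD_{\lambda^*})^{\frac{\alpha-1}{\alpha}}$.

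For the second H\"older factor I would use the boundedness $L(h_z^\eta,x,y)\le M$. This is the one place that needs the structure of the problem: $h_z^\eta(x)$ is a convex combination $\sum_k w_k(x)h_k(x)$ with $w_k(x)\ge 0$, $\sum_k w_k(x)=1$, so by convexity of $L$ in its first argument $L(h_z^\eta,x,y)\le \sum_k w_k(x)L(h_k,x,y)\le M$. Hence $L^{\frac{\alpha}{\alpha-1}} = L\cdot L^{\frac{1}{\alpha-1}}\le M^{\frac{1}{\alpha-1}}L$, and
\[
\Big(\sum_{(x,y)}\sD_{\lambda^*}(x,y)\,L(h_z^\eta,x,y)^{\frac{\alpha}{\alpha-1}}\Big)^{\frac{\alpha-1}{\alpha}}
\le M^{\frac1\alpha}\,\cL(\sD_{\lambda^*},h_z^\eta)^{\frac{\alpha-1}{\alpha}}.
\]
Combining the two factors gives $\cL(\sD_T,h_z^\eta)\le \big[\sfd_\alpha(\sD_T\parallel\sD_{\lambda^*})\,\cL(\sD_{\lambda^*},h_z^\eta)\big]^{\frac{\alpha-1}{\alpha}}M^{\frac1\alpha}$, and substituting $\cL(\sD_{\lambda^*},h_z^\eta)\le \e+\delta$ (Corollary~\ref{th:mixture}) together with $\sfd_\alpha(\sD_T\parallel\sD_{\lambda^*})=\sfd_\alpha(\sD_T\parallel\cD)$ yields the stated bound.

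The step I expect to be the main obstacle is engineering the H\"older split so that the divergence term appears with exactly the exponents of the R\'enyi definition while the remaining factor reduces to the \emph{linear} loss $\cL(\sD_{\lambda^*},h_z^\eta)$ rather than a power of it; this is precisely where the bound $L\le M$ is spent to trade $L^{\frac{\alpha}{\alpha-1}}$ for $L$, and verifying that $L(h_z^\eta,x,y)\le M$ via convexity is the only genuinely model-specific ingredient. Everything else is bookkeeping of exponents and the compactness argument for attaining the infimum over $\cD$.
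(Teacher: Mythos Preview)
Your proposal is correct and follows essentially the same route as the paper: factor $\sD_T(x,y)L(h_z^\eta,x,y)$ against $\sD_\lambda(x,y)^{\frac{\alpha-1}{\alpha}}$, apply H\"older with exponents $\alpha$ and $\tfrac{\alpha}{\alpha-1}$, use $L\le M$ to reduce $L^{\frac{\alpha}{\alpha-1}}$ to $M^{\frac{1}{\alpha-1}}L$, and then invoke Corollary~\ref{th:mixture}. The only cosmetic difference is that the paper carries out the H\"older computation for an arbitrary $\sD\in\cD$ and takes the infimum at the very end, whereas you first pick an optimal $\lambda^*$; the paper's ordering sidesteps the need to argue that the infimum is attained (your continuity claim should really be lower semicontinuity on the extended reals, though this is harmless since a lower semicontinuous convex function on compact $\Delta$ does attain its infimum). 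You also supply, via convexity of $L$, the justification that $L(h_z^\eta,x,y)\le M$, which the paper simply assumes in passing.
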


\begin{proof}
  For any hypothesis $h\colon \cX \to \cY $ and any distribution
  $\sD$, by H\"older's inequality, the following holds:
\begin{align*}
    \cL(\sD_T, h) 
    & = \sum_{(x, y) \in \cX \times \cY} \sD_T(x, y) L(h(x), y)\\
& = \sum_{(x, y) \in \cX \times \cY} \left [\frac{\sD_T(x, y)}{\sD(x,
  y)^{\frac{\alpha - 1}{\alpha}}}\right] \left[ \sD(x,
  y)^{\frac{\alpha - 1}{\alpha}} L(h(x), y) \right]\\
& \leq \left [\sum_{(x, y)} \frac{\sD_T(x, y)^\alpha}{\sD(x,
  y)^{\alpha - 1}}\right]^{\frac{1}{\alpha}} \left[ \sum_{(x, y)} \sD(x,
  y) L(h(x), y)^{\frac{\alpha}{\alpha - 1}} \right]^{\frac{\alpha -
  1}{\alpha}} .
\end{align*}
Thus, by definition of $\sfd_\alpha$, for any $h$ such that
$L(h(x), y) \leq M$ for all $(x, y)$, we can write
\begin{align*}
     \cL(\sD_T, h) 
& \leq  \sfd_\alpha(\sD_T \mspace{-4mu}\parallel \mspace{-4mu}\sD)^{\frac{\alpha -
  1}{\alpha}} \mspace{-8mu} \left[ \sum_{(x, y)} \sD(x,
  y) L(h(x), y)^{\frac{\alpha}{\alpha - 1}} \mspace{-4mu} \right]^{\mspace{-8mu}\frac{\alpha -
  1}{\alpha}}\\
& = \sfd_\alpha(\sD_T \mspace{-4mu}\parallel \mspace{-4mu}\sD)^{\frac{\alpha - 1}{\alpha}}
  \mspace{-8mu} \left[ \sum_{(x, y)} \mspace{-4mu}\sD(x,
  y) L(h(x), y) L(h(x), y)^{\frac{1}{\alpha - 1}} \mspace{-4mu} \right]^{\mspace{-8mu}\frac{\alpha -
  1}{\alpha}}\\
&\leq \sfd_\alpha(\sD_T \mspace{-4mu}\parallel \mspace{-4mu}\sD)^{\frac{\alpha - 1}{\alpha}} \left[ \sum_{(x, y)} \mspace{-4mu}\sD(x,
  y) L(h(x), y) M^{\frac{1}{\alpha - 1}} \right]^{\mspace{-8mu}\frac{\alpha -
  1}{\alpha}}\\
&\leq \Big[ \sfd_\alpha(\sD_T \mspace{-4mu}\parallel \mspace{-4mu}\sD) \, \cL(\sD, h)
  \Big]^{\mspace{-6mu}\frac{\alpha - 1}{\alpha}} M^{\frac{1}{\alpha }}.
\end{align*}
Now, by Corollary~\ref{th:mixture}, 
there exists $z \in \Delta$ and
$\eta > 0$ such that $\cL(\sD, h_z^\eta) \leq \e + \delta$ for any
mixture distribution $\sD \in \cD$. Thus, in view of the previous
inequality, we can write,for any $\sD \in \cD$,
\begin{equation*}
\cL(\sD_T, h_z^\eta) \leq \Big[(\e + \delta) \, \sfd_\alpha(\sD_T \parallel \sD)
 \Big]^{\frac{\alpha - 1}{\alpha}} M^{\frac{1}{\alpha }}.
\end{equation*}
Taking the infimum of the right-hand side over all $\sD \in \cD$
completes the proof.
\end{proof}

\begin{corollary}%[Same conditional probability, mixture target distribution with distribution estimates]
\label{th:estimate}
Let $\sD_T$ be an arbitrary target distribution.
Then, for any $\delta > 0$, there exists $\eta > 0$ and
$z \in \Delta$, such that the following inequality holds for any
$\alpha > 1$:
\begin{equation*}
\cL(\sD_T, \h h_z^\eta) 
\leq \Big[(\h \e + \delta) \, \sfd_\alpha(\sD_T \parallel \h \cD)
\Big]^{\frac{\alpha - 1}{\alpha}} M^{\frac{1}{\alpha }},
\end{equation*}
where $\h \e = \max_{k \in [p]} \Big[\e  \, \sfd_\alpha(\h \sD_k \parallel \sD_k)
\Big]^{\frac{\alpha - 1}{\alpha}} M^{\frac{1}{\alpha }}$, 
and $\h \cD = \left\{ \sum_{k = 1}^p \lambda_k \h \sD_k\colon \lambda \in \Delta \right\}$.
\end{corollary}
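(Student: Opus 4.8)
The plan is to recognize that $\h h_z^\eta$ is nothing but the distribution-weighted combination rule attached to the \emph{estimated} source distributions $\h\sD_k$ in place of the true ones. Consequently, the entire apparatus already established for true sources (Lemma~\ref{lemma:brouwer}, Corollary~\ref{th:mixture}, and Corollary~\ref{th:arbitrary}) applies verbatim to the family $\{\h\sD_k\}$, provided we first control how accurate each base predictor $h_k$ is on \emph{its own} estimated distribution $\h\sD_k$. The proof therefore has two moves: a bootstrap step that produces the accuracy parameter $\h\e$, and a direct invocation of Corollary~\ref{th:arbitrary} with $\{\h\sD_k\}$ as the sources.

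First I would re-run the H\"older-inequality estimate from the proof of Corollary~\ref{th:arbitrary}, but comparing $\h\sD_k$ against $\sD_k$ rather than $\sD_T$ against a mixture. Since $L(h_k(x),y)\le M$ everywhere, this gives, for each $k\in[p]$,
\begin{equation*}
\cL(\h\sD_k, h_k) \le \Big[\sfd_\alpha(\h\sD_k \parallel \sD_k)\,\cL(\sD_k, h_k)\Big]^{\frac{\alpha-1}{\alpha}} M^{\frac{1}{\alpha}} \le \Big[\e\,\sfd_\alpha(\h\sD_k \parallel \sD_k)\Big]^{\frac{\alpha-1}{\alpha}} M^{\frac{1}{\alpha}},
\end{equation*}
where the last inequality uses $\cL(\sD_k, h_k)\le \e$. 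Taking the maximum over $k$ yields $\cL(\h\sD_k, h_k) \le \h\e$ for every $k$, which is exactly the statement that the $h_k$ are $\h\e$-accurate predictors for the estimated sources.

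With this in hand, the second step is a direct appeal. The hypotheses $h_k$ are now $\h\e$-accurate on $\h\sD_k$, their loss is still bounded by $M$, and (under the same-conditional assumption governing this subsection) the conditionals $\h\sD_k(\cdot\mid x)$ do not depend on $k$. These are precisely the conditions under which Corollary~\ref{th:arbitrary} was established, the only changes being $\e \mapsto \h\e$ and $\cD \mapsto \h\cD = \{\sum_k \lambda_k \h\sD_k\}$. Note that the choices of $\eta$ and $z$ produced by Lemma~\ref{lemma:brouwer} and Corollary~\ref{th:mixture} are independent of $\alpha$, so they are fixed once and the resulting bound then holds simultaneously for every $\alpha>1$. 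Applying Corollary~\ref{th:arbitrary} to the estimated sources therefore gives $\eta>0$ and $z\in\Delta$ with
\begin{equation*}
\cL(\sD_T, \h h_z^\eta) \le \Big[(\h\e + \delta)\,\sfd_\alpha(\sD_T \parallel \h\cD)\Big]^{\frac{\alpha-1}{\alpha}} M^{\frac{1}{\alpha}},
\end{equation*}
which is the claim.

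The main obstacle is conceptual rather than computational: one must verify that every hypothesis of the underlying results transfers intact from the true sources to the estimated sources. The Brouwer fixed-point argument of Lemma~\ref{lemma:brouwer} and the convexity manipulations in Corollary~\ref{th:mixture} are completely insensitive to whether the $\sD_k$ are genuine or estimated, so the single genuine point to check is that the estimated conditionals $\h\sD_k(\cdot\mid x)$ inherit the independence-of-$k$ property used in those proofs. Once this is granted, the two H\"older estimates compose cleanly — the first producing $\h\e$ through the factor $\sfd_\alpha(\h\sD_k\parallel\sD_k)$, and the second, hidden inside Corollary~\ref{th:arbitrary}, producing the factor $\sfd_\alpha(\sD_T\parallel\h\cD)$ — and no further estimates are needed.
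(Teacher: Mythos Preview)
Your proposal is correct and follows essentially the same two-step route as the paper: first use the H\"older estimate from the proof of Corollary~\ref{th:arbitrary} to obtain $\cL(\h\sD_k,h_k)\le\h\e$ for each $k$, then invoke Corollary~\ref{th:arbitrary} with the estimated sources $\h\sD_k$ in place of the true ones. Your explicit remark that the choice of $\eta$ and $z$ is independent of $\alpha$, and your flagging of the same-conditional assumption for the $\h\sD_k$, are useful clarifications that the paper leaves implicit.
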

\begin{proof}
By the first part of the proof of Corollary~\ref{th:arbitrary},
for any $k \in [p]$ and $\alpha > 1$, the following
inequality holds:
\begin{align*}
\cL(\h \sD_k, h_k) 
& \leq \Big[\sfd_\alpha(\h \sD_k \parallel  \sD_k) \, \cL(\sD_k, h_k)
\Big]^{\frac{\alpha - 1}{\alpha}} M^{\frac{1}{\alpha }}\\
& \leq \Big[\e \, \sfd_\alpha(\h \sD_k \parallel  \sD_k) 
\Big]^{\frac{\alpha - 1}{\alpha}} M^{\frac{1}{\alpha }} \leq \h \e.
\end{align*}
We can now apply the result of Corollary~\ref{th:arbitrary} 
(with
$\h \e$ instead of $\e$ and $\h \sD_k$ instead of $\sD_k$).
In view that, there exists $\eta > 0$ and $z \in \Delta$
such that
\begin{equation*}
\cL(\sD_T, h_z^\eta) 
\leq \Big[(\h \e + \delta) \, \sfd_\alpha(\sD_T \parallel \h \sD)
\Big]^{\frac{\alpha - 1}{\alpha}} M^{\frac{1}{\alpha }},
\end{equation*}
for any distribution $\h \sD$ in the family $\h \cD$. Taking the
infimum over all $\h \sD$ in $\h \cD$ completes the proof.
\end{proof}

This result shows that there exists a predictor $\h h_z^\eta$ based on
the estimate distributions $\h \sD_k$ that is $\h \e$-accurate with
respect to any target distribution $\sD_T$ whose R\'enyi divergence
with respect to the family $\h \cD$ is not too large
($\sfd_\alpha(\sD_T \parallel \h \cD)$ close to $1$).  Furthermore,
$\h \e$ is close to $\e$, provided that $\h \sD_k$s are good estimates
of $\sD_k$s (that is $\sfd_\alpha(\h \sD_k \parallel \sD_k)$ close to
$1$).

Corollary~\ref{th:estimate} used R\'enyi divergence
in both directions: $\sfd_\alpha(\sD_T \parallel \h \cD)$
requires $\text{Supp}(\sD_T) \subseteq \text{Supp}(\h\cD)$,
and $\sfd_\alpha(\h \sD_k \parallel \sD_k)$ requires 
$\text{Supp}(\h\sD_k) \subseteq \text{Supp}(\sD_k), k\in [p]$.
In our experiments in Section~\ref{sec:eval}, 
we used bigram language model for sentiment analysis, and 
kernel density estimation with a Gaussian kernel for object recognition. Both density estimation methods fulfill these requirements. 

Finally we prove our main result Theorem~\ref{th:distinct_mixture} under the regression model (R). We first prove a stronger version for Theorem~\ref{th:distinct_mixture}, next we show that it will coincide with Theorem~\ref{th:distinct_mixture} under the assumption that $\msD_T \in \cD^1$.
\begin{theorem}\label{th:distinct}
Let $\sD_T$ be an arbitrary target distribution. Then, for any $\delta >0$, 
there exists $\eta >0$ and $z \in \Delta$ such that the following inequality
holds for any $\alpha>1$:
\begin{align*}
\cL(\sD_T, h_z^\eta) &\leq \Big[ (\e_{T} + \delta) \sfd_\alpha (\sD_T \parallel \sD_{P,T})\Big]^
{\frac{\alpha-1}{\alpha}} M^{\frac{1}{\alpha}} & (R),
\end{align*}
where 
$$    \e_{T}=\max_{k\in [p]} \Big[\mathbb{E}_{\msD_k(x)} \sfd_{\alpha}\left(
\sD_T(\cdot | x) \parallel \sD_k(\cdot | x)\right)^{\alpha-1}\Big]^{\frac{1}{\alpha}}
\e^{\frac{\alpha-1}{\alpha}}M^{\frac{1}{\alpha}},$$
and $\sD_{k,T}(x,y)=\msD_k(x)\sD_T(y|x)$,
$\sD_{P,T} = \big\{ \sum_{k = 1}^p \lambda_k \sD_{k,T}, \lambda \in \Delta \big\}$.
\end{theorem}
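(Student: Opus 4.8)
The plan is to reduce the distinct-conditional case to the common-conditional machinery already developed, by introducing the \emph{hybrid} distributions $\sD_{k,T}(x,y)=\msD_k(x)\sD_T(y|x)$. These are engineered to have two crucial properties. First, each $\sD_{k,T}$ retains the \emph{source} marginal $\msD_k$, so the regression-model combination $h_z^\eta$ in~\eqref{eq:h_reg} — which depends only on the marginals $\msD_k$ — is literally unchanged whether one views it as built from the $\sD_k$ or from the $\sD_{k,T}$. Second, all the $\sD_{k,T}$ share the \emph{same} conditional $\sD_T(\cdot|x)$, so the family $\{\sD_{k,T}\}$ satisfies the common-conditional hypothesis of Corollary~\ref{th:mixture}, even though the original $\sD_k$ do not.

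The first step is to show that each source predictor $h_k$ is $\e_T$-accurate on its hybrid distribution, i.e.\ $\cL(\sD_{k,T},h_k)\le \e_T$. I would obtain this by a \emph{double} application of H\"older's inequality. Applying H\"older to the inner sum over $y$ (with the conditional measures $\sD_T(\cdot|x)$ and $\sD_k(\cdot|x)$), exactly as in the first part of Corollary~\ref{th:arbitrary} but at the level of conditionals, bounds $\sum_y \sD_T(y|x)L(h_k(x),y)$ by $\sfd_\alpha(\sD_T(\cdot|x)\parallel\sD_k(\cdot|x))^{\frac{\alpha-1}{\alpha}}\big[\sum_y \sD_k(y|x)L(h_k(x),y)\big]^{\frac{\alpha-1}{\alpha}}M^{\frac1\alpha}$ after using $L\le M$. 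Summing against $\msD_k(x)$ and applying H\"older a second time over $x$ with conjugate exponents $\alpha$ and $\tfrac{\alpha}{\alpha-1}$ separates out the factor $\big[\E_{\msD_k(x)}\sfd_\alpha(\sD_T(\cdot|x)\parallel\sD_k(\cdot|x))^{\alpha-1}\big]^{1/\alpha}$ from $\cL(\sD_k,h_k)^{\frac{\alpha-1}{\alpha}}\le\e^{\frac{\alpha-1}{\alpha}}$; the precise pairing of exponents is what produces the nested form of $\e_T$, and bounding by the maximum over $k$ gives $\cL(\sD_{k,T},h_k)\le\e_T$ for every $k$.

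With this in hand, the remainder parallels the common-conditional proofs applied to the hybrid family. First, I would re-run the Brouwer fixed-point argument of Lemma~\ref{lemma:brouwer} with $\cL(\sD_{k,T},h_z^\eta)$ in place of $\cL(\sD_k,h_z^\eta)$ — the argument only uses continuity in $z$ and positivity of the denominator, so it goes through verbatim — to obtain $z\in\Delta$ with $\cL(\sD_{k,T},h_z^\eta)\le \cL(\sD_{z,T},h_z^\eta)+\eta'$ for all $k$, where $\sD_{z,T}=\sum_k z_k\sD_{k,T}$. Next, because the $\sD_{k,T}$ share the conditional $\sD_T(\cdot|x)$, the computation in the proof of Corollary~\ref{th:mixture} applies verbatim to yield $\cL(\sD_{z,T},h_z^\eta)\le\sum_k z_k\cL(\sD_{k,T},h_k)+\eta M\le \e_T+\eta M$. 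Combining these and averaging over any mixture weight $\lambda$ gives $\cL(\sD_{\lambda,T},h_z^\eta)\le \e_T+\eta M+\eta'$ for every $\sD_{\lambda,T}\in\sD_{P,T}$; choosing $\eta=\tfrac{\delta}{2M}$ and $\eta'=\tfrac{\delta}{2}$ yields $\cL(\sD_{\lambda,T},h_z^\eta)\le \e_T+\delta$. Finally, invoking the H\"older bound from the first half of Corollary~\ref{th:arbitrary} with the family $\sD_{P,T}$ and taking the infimum over its members converts accuracy on $\sD_{P,T}$ into the stated bound $\big[(\e_T+\delta)\,\sfd_\alpha(\sD_T\parallel\sD_{P,T})\big]^{\frac{\alpha-1}{\alpha}}M^{1/\alpha}$ against the arbitrary target $\sD_T$.

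The main obstacle is the first step, and specifically getting the exponents to line up so that the divergence-of-conditionals term assembles into precisely $\e_T$. One must apply H\"older twice in the right order — inside over $y$ and outside over $x$ — with the second application using exponents $(\alpha,\tfrac{\alpha}{\alpha-1})$ so that the $\cL(\sD_k,h_k)^{\frac{\alpha-1}{\alpha}}$ factor emerges cleanly; any other pairing fails to reproduce the outer power $1/\alpha$ on the averaged conditional divergence. Once the identities $\msD_{k,T}=\msD_k$ and the shared-conditional property are observed, the rest is a faithful transcription of the earlier arguments to the hybrid family.
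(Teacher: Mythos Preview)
Your proposal is correct and follows essentially the same route as the paper: prove $\cL(\sD_{k,T},h_k)\le\e_T$ via the double H\"older argument (inner over $y$, outer over $x$ with exponents $(\alpha,\tfrac{\alpha}{\alpha-1})$), then invoke the common-conditional machinery on the hybrid family $\{\sD_{k,T}\}$. The paper compresses your steps 3--6 into a single line (``apply Corollary~\ref{th:arbitrary} with $\e_T$ in place of $\e$ and $\sD_{k,T}$ in place of $\sD_k$''), but your explicit unpacking --- in particular the observations that $\msD_{k,T}=\msD_k$ leaves $h_z^\eta$ unchanged and that the shared conditional $\sD_T(\cdot|x)$ restores the hypothesis of Corollary~\ref{th:mixture} --- is exactly what makes that substitution legitimate.
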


\begin{proof}
For any domain $k$, by H\"older's inequality, the following holds:
\begin{align*}
    \cL(\sD_{k,T},h_k) 
    & = \sum_{x,y} \msD_k(x) \sD_T(y|x) L(h_k, x,y)  \\
& =\sum_{x}\msD_k(x) \sum_y\left[\frac{\sD_T(y|x)}{\sD_k(y|x)^
{\frac{\alpha-1}{\alpha}}}\right]
\left[\sD_k(y|x)^{\frac{\alpha-1}{\alpha}}L(h_k,x,y)\right] \\
 &\le \sum_{x}\msD_k(x)  \sfd_{\alpha}(x;T,k)^{\frac{\alpha-1}{\alpha}} 
\Big[\sum_{y}\sD_k(y| x)L(h_k,x,y)^{\frac{\alpha}{\alpha-1}}\Big]^{\frac{\alpha-1}{\alpha}}
\end{align*}
where, for simplicity, we write 
$\sfd_{\alpha}(x;T,k)  = 
 \sfd_{\alpha}\left(\sD_T(\cdot| x)\parallel \sD_k(\cdot| x)\right)$. 
 Using the fact that the loss is bounded and H\"older's inequality again, 
  \begin{align*}
       \cL(\sD_{k,T},h_k)   &\le 
  \sum_{x} \msD_k(x)^{\frac{1}{\alpha}}
\sfd_{\alpha}(x;T,k)^{\frac{\alpha-1}{\alpha}} 
\left[\sum_{y}
\sD_k(x,y)L(h_k,x,y)\right]^{\frac{\alpha-1}{\alpha}}
  M^{\frac{1}{\alpha}}\\
%M^{\frac{1}{\alpha-1}}\right]^{\frac{\alpha-1}{\alpha}} \\
&\le \left[\sum_{x}\msD_k(x)\sfd_{\alpha}(x;T,k)^{\alpha-1}\right]^{\frac{1}{\alpha}}
%\left[\cL(\sD_k,h_k)M^{\frac{1}{\alpha-1}}\right]^{\frac{\alpha-1}{\alpha}}\\
%\cL(\sD_k,h_k)^{\frac{\alpha-1}{\alpha}} M^{\frac{1}{\alpha}}  \\
\left[\sum_{x,y} \sD_k(x,y)L(h_k,x,y)\right]^{\frac{\alpha-1}{\alpha}} M^{\frac{1}{\alpha}}  \\
& \le  \Big[\mathbb{E}_{\msD_k} \sfd_{\alpha}(x;T,k)^{\alpha-1}\Big]^{\frac{1}{\alpha}}
\e^{\frac{\alpha-1}{\alpha}}  M^{\frac{1}{\alpha}}  \le \e_{T}.
\end{align*}
We can now apply the result of Corollary~\ref{th:arbitrary}, 
with $\e_{T}$ instead of $\e$ and $\sD_{k,T}$ instead of $\sD_k$. 
This completes the proof.
 \end{proof}
When $\msD_T \in \cD^1$, $\sD_T \in \sD_{P,T}$, thus by the definition of R\'enyi divergence, $\sfd_\alpha (\sD_T \parallel \sD_{P,T})=1$. Theorem~\ref{th:distinct} coincides with Theorem~\ref{th:distinct_mixture} in this case.  
 
\subsection{Probability model}
\label{app:prob_theory}

In this section, we first present a series of general theoretical
results for the {probability model} (P) in the same order as in Appendix~\ref{app:reg_theory} . Many of the them are similar to those for the {regression model}, except that we do not assume anything about the conditional probabilities throughout the proofs.  In several instances, the proofs are syntactically the same as their counterparts in the {regression model} (R). In such cases, we do not reproduce them.

\begin{replemma}{lemma:brouwer}
For any $\eta, \eta' > 0$, there exists $z \in \Delta$, with $z_k \neq
0$ for all $k \in [p]$, such that the following holds for the
distribution-weighted combining rule $h_z^\eta$:
\begin{equation}
\forall k \in [p], \quad \cL(\sD_k, h_z^\eta) \leq \sum_{j = 1}^p z_j \cL(\sD_j, h_z^\eta)  + \eta'.
\end{equation}
\end{replemma}

\begin{proof}
    The proof is syntactically the same as that for the regression model.
    \end{proof}

\begin{repcorollary}{th:mixture}
For any $\delta > 0$, there exists $\eta > 0$ and $z\in \Delta$, such
that $\cL(\sD_\lambda,h_z^\eta)\leq \e +\delta$ \ for any mixture
parameter $\lambda \in \Delta$.
\end{repcorollary}
\begin{proof}
    Modifying the proof of Corollary~\ref{th:mixture} for the {regression model} gives

\begin{align*}
     \cL(\scrD_z, h_z^\eta)
& = \sum_{(x, y) \in \cX \times \cY} \scrD_z(x,y) L(h_z^\eta(x, y))\\
& = \sum_{(x, y)} \scrD_z(x, y) L \mspace{-2mu} \left(\mspace{-1mu} \sum_{k = 1}^p \frac{z_k
    \scrD_k(x, y) + \eta  \frac{\scrU(x, y)}{p}}{\scrD_z(x, y) + \eta
    \scrU(x, y)} h_k(x, y) \mspace{-4mu} \right) \mspace{-4mu}.
\end{align*}
By convexity of $L$, this implies that
\begin{align*}
    &\cL(\scrD_z, h_z^\eta)
     \leq \sum_{(x, y)} \scrD_z(x, y) \sum_{k = 1}^p \frac{z_k
    \scrD_k(x, y) + \eta  \frac{\scrU(x, y)}{p}}{\scrD_z(x, y) + \eta
    \scrU(x, y)} L\big(h_k(x, y)\big).
\end{align*}
 Next, since $\frac{\scrD_z(x, y)}{\scrD_z(x, y) + \eta
    \scrU(x, y)} \leq 1$, the following holds:
\begin{align*}
    \cL(\scrD_z, h_z^\eta)
& \leq \sum_{(x, y)}
\Big(\sum_{k = 1}^p \big(z_k \scrD_k(x,y) + \tfrac{\eta \scrU(x,y)}{p} \big)
L(h_k(x,y)) \Big)\\
& = \sum_{k = 1}^p z_k \cL(\scrD_k, h_k) + \frac{\eta}{p} \sum_{k = 1}^p
\cL(\scrU, h_k)\\
& \leq \sum_{k = 1}^p z_k \e + \eta M
= \e+ \eta M .
\end{align*}
Now choose $z \in \Delta$ as in the statement of
Lemma 4a.% ~\ref{lemma:brouwer}. 
Then, the following holds for any mixture distribution $\scrD_\lambda$:
\begin{align*}
    \cL(\scrD_\lambda,h_z^\eta)
    &= \sum_{k = 1}^p \lambda_k \cL(\scrD_k,h_z^\eta) 
 \leq \sum_{k = 1}^p \lambda_k (\cL(\scrD_z,h_z^\eta) +\eta')\\
 & = \cL(\scrD_z,h_z^\eta) +\eta'
 \leq \e + \eta M + \eta'.
\end{align*}

Setting $\eta = \frac{\delta}{2M}$ and $\eta' = \frac{\delta}{2}$
concludes the proof.

\end{proof}

Since we do not assume the conditional probabilities are the same across domains,  we can directly prove Theorem~\ref{th:distinct} for the conditional probability model (P), which coincides with Theorem~\ref{th:distinct_mixture} when $\sD_T \in \cD$.

\begin{reptheorem}{th:distinct}
Let $\sD_T$ be an arbitrary target distribution.
For any $\delta > 0$, there exists
$\eta > 0$ and $z \in \Delta$, such that
the following inequality holds for any $\alpha > 1$:
\begin{align*}
\cL(\sD_T, h_z^\eta) 
& \leq \Big[(\e + \delta) \, \sfd_\alpha(\sD_T \parallel \cD)
\Big]^{\frac{\alpha - 1}{\alpha}} M^{\frac{1}{\alpha }} & (P).
\end{align*}
\end{reptheorem}
\begin{proof}
The proof is syntactically the same as that of Corollary~\ref{th:arbitrary} for the regression model.
\end{proof}

\begin{repcorollary}{th:estimate}
Let $\sD_T$ be an arbitrary target distribution.
Then, for any $\delta > 0$, there exists $\eta > 0$ and
$z \in \Delta$, such that the following inequality holds for any
$\alpha > 1$:
\begin{equation*}
\cL(\sD_T, \h h_z^\eta) 
\leq \Big[(\h \e + \delta) \, \sfd_\alpha(\sD_T \parallel \h \cD)
\Big]^{\frac{\alpha - 1}{\alpha}} M^{\frac{1}{\alpha }},
\end{equation*}
where $\h \e = \max_{k \in [p]} \Big[\e  \, \sfd_\alpha(\h \sD_k \parallel \sD_k)
\Big]^{\frac{\alpha - 1}{\alpha}} M^{\frac{1}{\alpha }}$, 
and $\h \cD = \left\{ \sum_{k = 1}^p \lambda_k \h \sD_k\colon \lambda \in \Delta \right\}$.
\end{repcorollary}
\begin{proof}
The proof is syntactically the same as that of Corollary~\ref{th:estimate} for the regression model.
\end{proof}

\clearpage
\section{Specific theoretical analysis for the cross-entropy loss}
\label{app:log-loss}
Next, we give a specific theoretical analysis for the case of the
cross-entropy loss. This is needed since the cross-entropy loss assumes normalized hypotheses. Thus, we are giving guarantees for the performance of normalized distribution-weighted predictor.

We will first assume that the conditional probability of the output
labels is the same for all source domains, that is, for any $(x, y)$,
$\scrD_k(y|x)$ is independent of $k$. 

\begin{reptheorem}{th:normalized}
Assume there exists $\mu>0$ such that $\scrD_k(x,y) \ge \mu \scrU(x,y)$ 
for all $k\in[p]$ and $(x,y)\in \cX \times \cY$. Then, for any $\delta > 0$, there exists
$\eta > 0$ and $z\in \Delta$, such that $\cL(\scrD_\lambda,\overline h_z^\eta)\leq
\e +\delta$ \ for any mixture parameter $\lambda \in \Delta$.
\end{reptheorem}

\begin{proof}
    By the proof of Corollary~\ref{th:mixture} for the probability model, 
    for any mixture distribution $\scrD_\lambda$:
\begin{equation*}
\cL(\scrD_\lambda,h_z^\eta) \leq \e + \eta M + \eta',
\end{equation*}
for some $\eta >0, \eta' >0$. For any $x\in \cX$,
\begin{align*}
       \overline h_z^\eta(x) 
    &= \sum_{y\in \cY} \sum_{k = 1}^p\frac{z_k \scrD_k(x, y) + \frac{\eta \scrU(x,y)}{p}}{
\scrD_z(x,y) + \eta \scrU(x,y) } h_k(x, y) \\
&\leq \sum_{y\in \cY} \sum_{k = 1}^p\frac{z_k \scrD_k(x, y) + \frac{\eta \scrU(x,y)}{p}}{
 \scrD_z(x, y) } h_k(x, y) \\
& = 1 + \eta \left[\frac{1}{p}\sum_{y\in \cY} \sum_{k = 1}^p\frac{\scrU(x,y) }{
 \scrD_z(x, y) } h_k(x, y) \right].
\end{align*}
By assumption, $\scrD_k(x,y) \ge \mu \scrU(x,y)$ for any $(x,y)$.
Therefore $\scrD_z(x,y) \ge \mu \scrU(x,y)$ for any $z\in \Delta$.
Since $0\le h_k(x,y)\le 1$, $\overline h_z^\eta(x)$ is upper bounded by
\begin{align*}
    \overline h_z^\eta(x) &\leq  1 + 
    \eta \left[\frac{1}{p}\sum_{y\in \cY} \sum_{k = 1}^p\frac{\scrU(x,y) }{
\scrD_z(x, y) } h_k(x, y) \right] 
\leq 1+ \frac{\eta |\cY|}{\mu}.
\end{align*} 
It follows that
\begin{align*}
    \cL(\scrD_\lambda,\overline h_z^\eta) 
  & = \cL(\scrD_\lambda,h_z^\eta)  
    + \mathbb{E}_{\scrD_\lambda(x)}[\log(\overline h_z^\eta(x))] 
    \leq  \e + \eta M + \eta' + \log \left(1 +  \frac{\eta |\cY|}{\mu}\right) \\
  &  \leq \e + \eta \left(M +\frac{ |\cY|}{\mu}\right) + \eta' . 
\end{align*}
Setting $\eta = \frac{\delta}{2\left(M +\frac{ |\cY|}{\mu}\right)}$ 
and $\eta' = \frac{\delta}{2}$ concludes the proof.
\end{proof}

The analysis above depends on the key assumption that the conditional
distributions $\scrD_k(y | x)$ are independent of $k$.  When this
assumption does not hold, we can show that there is a lower bound of
$\log(p)$ on the generalization error
$\cL(\scrD_\lambda,\overline h_z^\eta)$. However, this lower bound
coincides with that of convex combination rule (Lemma~\ref{lemma:lower_log}).
In that case, one can use the following 
marginal distribution-weighted combination instead:
\begin{equation}
\label{eq:marginal_h}
  \wt h_z^\eta(x, y) = \sum_{k = 1}^p\frac{z_k \msD_k(x) + \eta \, \frac{\msU(x)}{p}}{
    \sum_{j = 1}^p z_j \msD_j(x) + \eta \, \msU(x)} h_k(x, y),
\end{equation}
where $\msD_k(x)$ is the marginal distribution over $\cX$,
$\msD_k(x) = \sum_{y\in\cY}\scrD_k(x,y)$,
and $\msU(x)$ is a uniform distribution over $\cX$.
Observe that $\wt h_z^\eta(x,y)$ is already normalized. 

One can modify Theorem~\ref{th:distinct}
to obtain generalization guarantees
for $\wt h_z^\eta$ under distinct conditional probabilities assumption. 
Let $\sD_T(x,y)$, $\e_{T}$ and $\sD_{P,T}$ be defined as before.

\begin{theorem}
\label{th:normalized-bis}
Let $\sD_T$ be an arbitrary target distribution. Then, for any $\delta >0$, 
there exists $\eta >0$ and $z \in \Delta$ such that the following inequality
holds for any $\alpha>1$:
\begin{equation*}
\cL(\sD_T, \wt h_z^\eta) \le \Big[ (\e_{T} + \delta) \sfd_\alpha (\sD_T \parallel \sD_{P,T})\Big]^
{\frac{\alpha-1}{\alpha}} M^{\frac{1}{\alpha}}.
\end{equation*}
\end{theorem}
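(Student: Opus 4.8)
The plan is to mirror the proof of Theorem~\ref{th:distinct} for the regression model. The key observation is that the marginal combining rule $\wt h_z^\eta(x,y)$ uses the weighting coefficients $\frac{z_k \msD_k(x) + \eta \msU(x)/p}{\sum_j z_j \msD_j(x) + \eta \msU(x)}$, which are syntactically identical to those of the regression rule $h_z^\eta(x)$ in~\eqref{eq:h_reg}. Since the cross-entropy loss is convex and bounded by $M$, every step of the regression-model chain of arguments (Lemma~\ref{lemma:brouwer}, Corollary~\ref{th:mixture}, Corollary~\ref{th:arbitrary}) transfers with $h_k(x,y)$ in place of $h_k(x)$ and $L(h_k(x,y))$ in place of $L(h_k(x),y)$.

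First I would establish, for each $k \in [p]$, the bound $\cL(\sD_{k,T}, h_k) \le \e_T$. This is proven exactly as the opening of the proof of Theorem~\ref{th:distinct}: write $\cL(\sD_{k,T}, h_k) = \sum_x \msD_k(x) \sum_y \sD_T(y|x) L(h_k, x, y)$, apply H\"older's inequality over $y$ with exponents $\alpha$ and $\frac{\alpha}{\alpha-1}$ after inserting the factor $\sD_k(y|x)^{(\alpha-1)/\alpha}$ (which produces the divergence $\sfd_\alpha(\sD_T(\cdot|x) \parallel \sD_k(\cdot|x))$), bound the loss by $M$, apply H\"older again over $x$, and finally invoke $\cL(\sD_k, h_k) \le \e$. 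The argument uses only boundedness of the loss, so it is insensitive to the choice of squared versus cross-entropy loss.

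Next I would carry over the fixed-point and mixture arguments to $\wt h_z^\eta$, treating the transported distributions $\{\sD_{k,T}\}_{k}$ as the sources. The point is that these distributions all share the common conditional $\sD_T(\cdot|x)$ and have unchanged marginals $\msD_{k,T}(x) = \sum_y \msD_k(x) \sD_T(y|x) = \msD_k(x)$, so $\wt h_z^\eta$ is exactly the rule one obtains by building the marginal combining rule from $\{\sD_{k,T}\}$. The Brouwer argument of Lemma~\ref{lemma:brouwer} then yields $z \in \Delta$ with $\cL(\sD_{k,T}, \wt h_z^\eta) \le \sum_j z_j \cL(\sD_{j,T}, \wt h_z^\eta) + \eta'$ for all $k$, and the convexity computation of Corollary~\ref{th:mixture} gives $\cL(\sD_{z,T}, \wt h_z^\eta) \le \e_T + \eta M$, where I use $\sD_{z,T}(y|x) = \sum_k \frac{z_k \msD_k(x)}{\msD_z(x)} \sD_T(y|x) = \sD_T(y|x)$ to collapse the weighted average. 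Combining the two inequalities gives $\cL(\sD_{\lambda, T}, \wt h_z^\eta) \le \e_T + \eta M + \eta'$ for every mixture $\sD_{\lambda,T} \in \sD_{P,T}$; setting $\eta = \frac{\delta}{2M}$ and $\eta' = \frac{\delta}{2}$ absorbs both terms into $\delta$.

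Finally I would lift from mixtures in $\sD_{P,T}$ to the arbitrary target $\sD_T$ by the H\"older/R\'enyi step of Corollary~\ref{th:arbitrary}: for any $\sD \in \sD_{P,T}$, $\cL(\sD_T, \wt h_z^\eta) \le \big[ \sfd_\alpha(\sD_T \parallel \sD) \, \cL(\sD, \wt h_z^\eta) \big]^{(\alpha-1)/\alpha} M^{1/\alpha}$, and taking the infimum over $\sD \in \sD_{P,T}$ produces the claimed bound. The main obstacle is the bookkeeping of the previous paragraph: one must verify that the transport $\sD_k \mapsto \sD_{k,T}$ leaves both the marginals (hence $\wt h_z^\eta$ itself) and the shared-conditional structure intact, so that the equal-conditional argument of Corollary~\ref{th:mixture} legitimately applies to $\{\sD_{k,T}\}$ even though the original sources $\sD_k$ have genuinely distinct conditionals. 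Once this is checked, everything else is a syntactic transcription of the regression-model proofs.
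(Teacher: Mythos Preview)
Your proposal is correct and follows exactly the approach the paper intends: the paper's own proof is simply the one-line remark that it is ``syntactically the same as that of Theorem~\ref{th:distinct},'' and you have faithfully unpacked that remark by observing that $\wt h_z^\eta$ shares the marginal-weighted structure of the regression rule, that the transported sources $\sD_{k,T}$ have common conditional $\sD_T(\cdot\mid x)$ and unchanged marginals $\msD_k$, and that therefore the chain Lemma~\ref{lemma:brouwer} $\to$ Corollary~\ref{th:mixture} $\to$ Corollary~\ref{th:arbitrary} $\to$ Theorem~\ref{th:distinct} carries over verbatim. Your explicit verification of the bookkeeping (marginals preserved, conditionals equalized) is in fact more detailed than what the paper provides.
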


\begin{proof}
    The proof is syntactically the same as that of Theorem~\ref{th:distinct}.
\end{proof}

Finally, we can extend Theorem~\ref{th:normalized} and Theorem~\ref{th:normalized-bis} 
to the case where only estimate distributions $\h\sD_k$s are available, and the predictor $\overline{ \h h_z^\eta}$
and $\widetilde{ \h h_z^\eta}$ based on the estimates $\h\sD_k$ still admit favorable guarantees. 
The results and proofs are similar to proving Corollary~\ref{th:estimate} from Corollary~\ref{th:arbitrary} in the regression model,  
thus omitted here. 

\clearpage
\section{DC-decomposition}
\label{app:dcd}

In this section we give the full proofs for the DC-decompositions
presented in Section~\ref{sec:dc}.

\subsection{Regression model}

\begin{repproposition}{prop:dc-regression}
Let $L$ be the squared loss. Then, for any $k \in [p]$,
$\cL(\sD_k, h_z^\eta) - \cL(\sD_z, h_z^\eta) = u_k(z) - v_k(z)$, where
$u_k$ and $v_k$ are convex functions defined for all $z$ by
\begin{align*}
u_k(z) &= \cL\left(\sD_k+\eta\msU \sD_k(\cdot | x),h_z^\eta \right) 
       -2M\sum_{x}(\msD_k+\eta\msU)(x)\log K_z(x),\\
v_k(z) &= \cL\left(\sD_z+\eta\msU \sD_k(\cdot | x),h_z^\eta \right) 
       -2M\sum_{x}(\msD_k+\eta\msU)(x)\log K_z(x).
\end{align*}
\end{repproposition}
\begin{proof}
First, observe that 
$(h_z^\eta(x) -y)^2  = f_z(x,y)-g_z(x)$,
where for every $(x,y)\in \cX \times \cY$, 
$f_z$ and $g_z$ are convex functions defined for all $z$:
\begin{align*}
    f_z(x,y)  &= \left(h_{z}^\eta(x)-y\right)^2-2M\log K_z(x),\\
g_z(x) &= -2M\log K_z(x).
\end{align*}
This is true because the Hessian matrix of $f_z$ and $g_z$ are
\begin{align*}
    H_{f_z} &=  \frac{2}{K_z^2} \left[h_{D,z} h_{D,z}^T + \left(M - (y-h_z^\eta)^2\right) DD^T \right] ,\\
    H_{g_z} &=  \frac{2M}{K_z^2}  DD^T,  
\end{align*}
where $h_{D,z}$ is a $p$-dimensional vector defined as
 $[h_{D,z}]_k = \sD_k(h_k + y - 2 h_z^\eta)$ for $k\in [p]$,
and $D = (\sD_1, \sD_2, \dots,\sD_p)^T$. 
Using the fact that $ M \ge (y-h_z^\eta)^2$, $H_{f_z}$ and $H_{g_z}$ are 
positive semidefinite matrices, therefore $f_z, g_z$ are convex functions of $z$.

Thus, $u_k(z)= \sum_{(x,y)} (\msD_k+\eta \msU)(x)\sD_k(y|x) f_z(x,y)$
is convex. Similarly, we can write the second term of $v_k(z)$ as 
$\sum_x (\msD_k + \eta \msU)(x) g_z(x)$, it is convex.
Using the notation previously defined, we can write the first term of $v_k(z)$ as
\begin{align*}
    &\cL(\sD_z+\eta\msU \sD_k(\cdot | x), h_z^\eta)
= \sum_{x} \frac{J_z(x)^2}{K_z(x)}  -2\mathbb{E} (y|x) J_z(x) + 
 \mathbb{E}(y^2|x) K_z(x).
\end{align*}
The Hessian matrix of $J_z^2/K_z$ is
\begin{equation*}
\nabla_z^2 \left(\frac{J_z^2}{K_z}\right)
 = \frac{1}{K_z}(h_D -h_z^\eta D)(h_D-h_z^\eta D)^T
\end{equation*}
where $h_D=(h_1 \sD_1,h_2 \sD_2,\dots,h_p \sD_p )^T$
and $D = (\sD_1, \sD_2, \dots,\sD_p)^T$. Thus $J_z^2/K_z$ is
convex. $-2\mathbb{E} (y|x) J_z(x) +  \mathbb{E}(y^2|x) K_z(x)$ 
is an affine function of $z$ and is therefore convex. 
Therefore the first term of $v_k(z)$ is convex, which completes the proof.  
\end{proof}

\subsection{Probability model}

\begin{repproposition}{prop:dc-probability}
  Let $L$ be the cross-entropy loss. Then, for $k \in [p]$,
  $\cL(\scrD_k, h_z^\eta) - \cL(\scrD_z, h_z^\eta) = u_k(z) - v_k(z)$,
  where $u_k$ and $v_k$ are convex functions defined for all $z$
  by
\begin{align*}
    u_k(z) & = -\sum_{x, y}  \big[ \scrD_k(x, y) + \eta \, \scrU(x,  y) \big] \log J_z(x, y),\\
    v_k(z) & = \sum_{x, y} K_z(x, y) \log \left[  \frac {K_z(x, y)}{J_z(x, y)} \right]    \\
           &- \left[ \scrD_k(x, y) + \eta \, \scrU(x,  y) \right] \log K_z(x, y).
\end{align*}
\end{repproposition}
\begin{proof}
  Using the notation previously introduced, we can now write
\begin{align*}
&\cL(\scrD_k, h_z^\eta) - \cL(\scrD_z, h_z^\eta) \\
 & = \E_{(x, y)\sim \scrD_k}[-\log h_z^{\eta}(x, y)] 
 - \E_{(x, y)\sim
  \scrD_z}[-\log h_z^{\eta}(x, y)] \\
  &  = \sum_{x, y} \big( \scrD_z(x, y) - \scrD_k(x, y) \big) \log \left[
  \frac{J_z(x, y)}{K_z(x, y)} \right]\\
&  = \sum_{x, y} \big[ K_z(x, y) - (\scrD_k(x, y) + \eta \, \scrU(x, y) ) \big] \log \left[
  \frac{J_z(x, y)}{K_z(x, y)} \right]\\
  &  = u_k(z) - v_k(z).
\end{align*}
$u_k$ is convex since $-\log J_z$ is convex as the composition of the
convex function $-\log$ with an affine function. Similarly,
$-\log K_z$ is convex, which shows that the second term in the
expression of $v_k$ is a convex function. The first term can be
written in terms of the unnormalized relative entropy:\footnote{The
  unnormalized relative entropy of $P$ and $Q$ is defined by
  $B(P \parallel Q) = \sum_{x, y} P(x, y) \log \left[ \frac {P(x,
      y)}{Q(x, y)} \right] + \sum_{(x, y)} (Q (x, y) - P(x, y))$.}
      \begin{align*}
          &\sum_{x, y} K_z(x, y) \log \left[ \frac {K_z(x, y)}{J_z(x, y)} \right] \\
          &= B(K_z \parallel J_z) + \sum_{(x, y)} (K_z - J_z)(x, y).
      \end{align*}
      The unnormalized relative entropy $B(\cdot \parallel \cdot)$ is
jointly convex \citep{CoverThomas2006},\footnote{To be precise, it can
  be shown that the relative entropy is jointly convex using the
  so-called log-sum inequality \citep{CoverThomas2006}. The same proof
  using the log-sum inequality can be used to show the joint convexity
  of the unnormalized relative entropy.} thus $B(K_z \parallel J_z)$
is convex as the composition of the unnormalized relative entropy with
affine functions (for each of its two arguments).  $(K_z - J_z)$ is an
affine function of $z$ and is therefore convex too.
\end{proof}

\clearpage
\section{Additional experiment results}
\label{app:more_exp}

In this section we provide experiment results on artificial datasets to show that our global objective 
 indeed approaches the known optimal of zero
 with DC-programming algorithm, for both squared loss and cross-entropy loss. We also provide details of our density estimation procedure on the real-world applications, as well as additional experiment results to show that our distribution-weighted predictor \texttt{DW} is robust across various test data mixtures.

\subsection{Artificial dataset}

%\subsection{Artificial dataset}

We first evaluated our algorithm on synthetic datasets, for both
squared loss and cross-entropy loss. 

%Here we briefly describe the setup for squared loss, the setup for $\log$-loss can be found in the supplementary.
Consider the following multiple source domain study by \cite{MansourMohriRostamizadeh2009}.  Let $g_1$, $g_2$, $g_3$, $g_4$
denote the Gaussian distributions with means $(1, 1)$, $(-1, 1)$,
$(-1, -1)$, and $(1, -1)$ and unit variance respectively.  Each domain
was generated as a uniform mixture of Gaussians: $\sD_1$ from
$\{g_1, g_2, g_3\}$ and $\sD_2$ from $\{g_2, g_3, g_4\}$.  The
labeling function is $f(x_1, x_2) = x_1^2 + x_2^2$.  We trained linear
regressors for each domain to produce base hypotheses $h_1$ and
$h_2$. Finally, as the true distribution is known for this artificial
example, we directly use the Gaussian mixture
density function to generate our $\sD_k$s.

With this data source, we used our DC-programming solution to find the
optimal mixing weights $z$.
Figure~\ref{fig:synthetic_loss} shows 
the global objective value (of Problem~\ref{eq:opt}) 
vs number of iterations with the uniform
initialization $z_0 = [1/2, 1/2]$. Here, the overall objective
approaches $0.0$, the known global minimum. To verify the robustness
of the solution, we have experimented with various initial conditions
and found that the solution converges to the global solution in each
case. 

\begin{figure}[t]
    \centering
    \includegraphics[height=.3\linewidth]{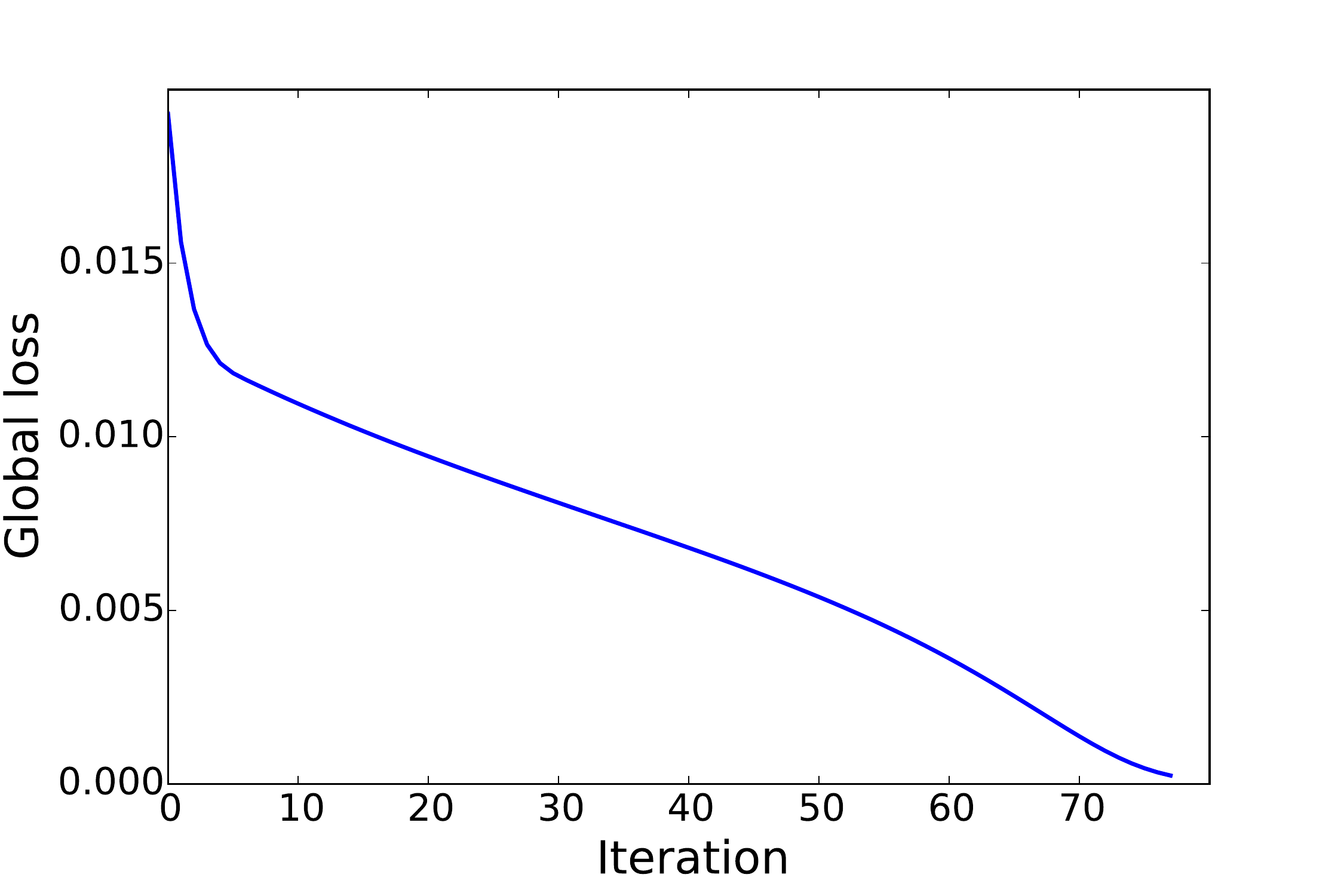}
    \caption{Synthetic global loss versus iteration for squared loss. Our solution converges to the global optimum of zero.}
    \label{fig:synthetic_loss}
\end{figure}

%\begin{figure}[t]
%    \centering
%    \includegraphics[height=.4\linewidth]{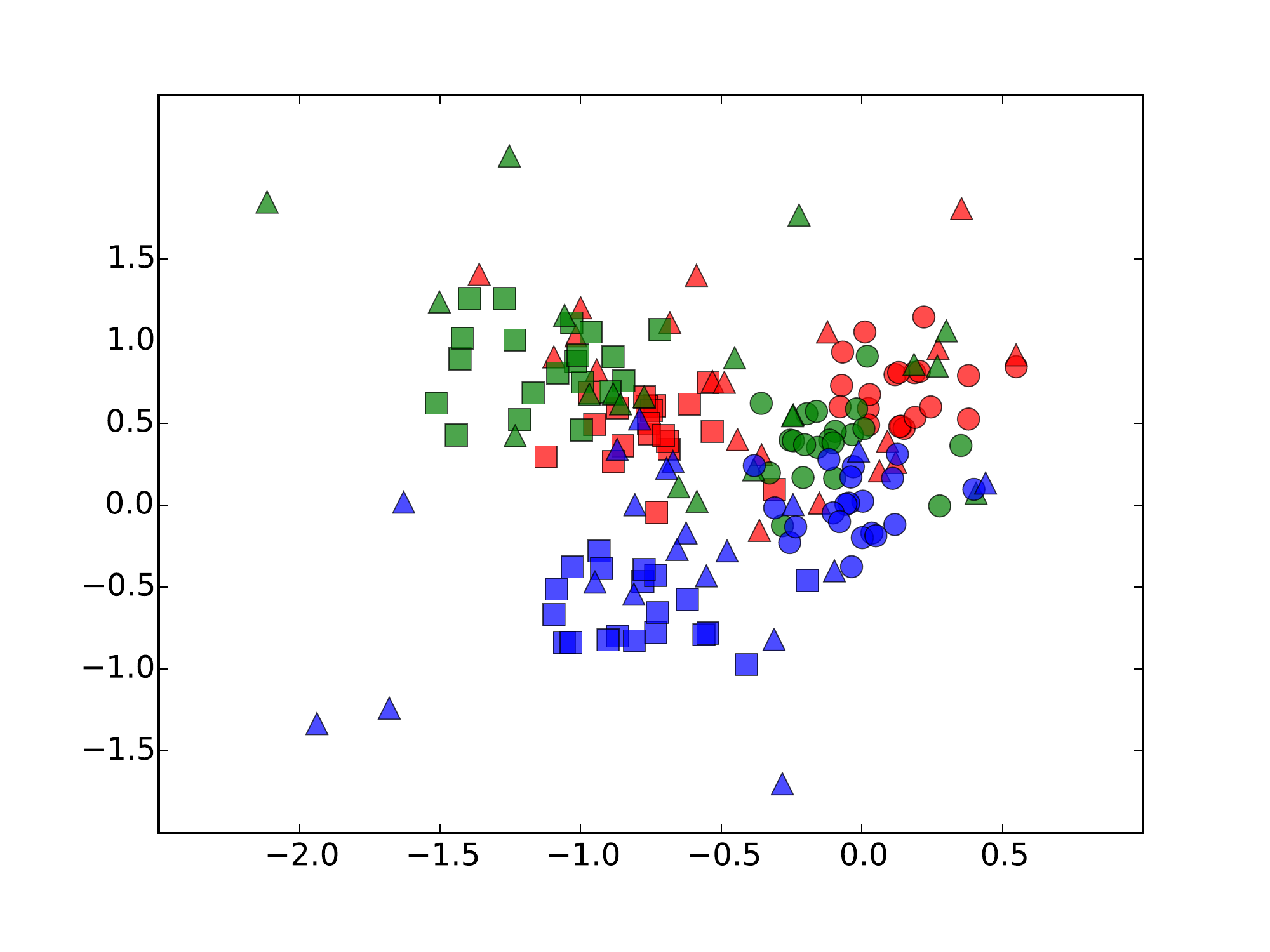}
%    \caption{Artificial dataset with three domains (red, green and blue) and three categories (triangle, square, circle).}
%    \label{fig:synthetic_pts}
%\end{figure}

 \begin{figure*}[t]
  	\centering
 	\subfloat[][]{
  	\includegraphics[height=.3\linewidth]{figures/gaussian_pts}
  	\label{fig:synthetic_pts}
 	}
 	\subfloat[][]{
  	\includegraphics[height=.3\linewidth]{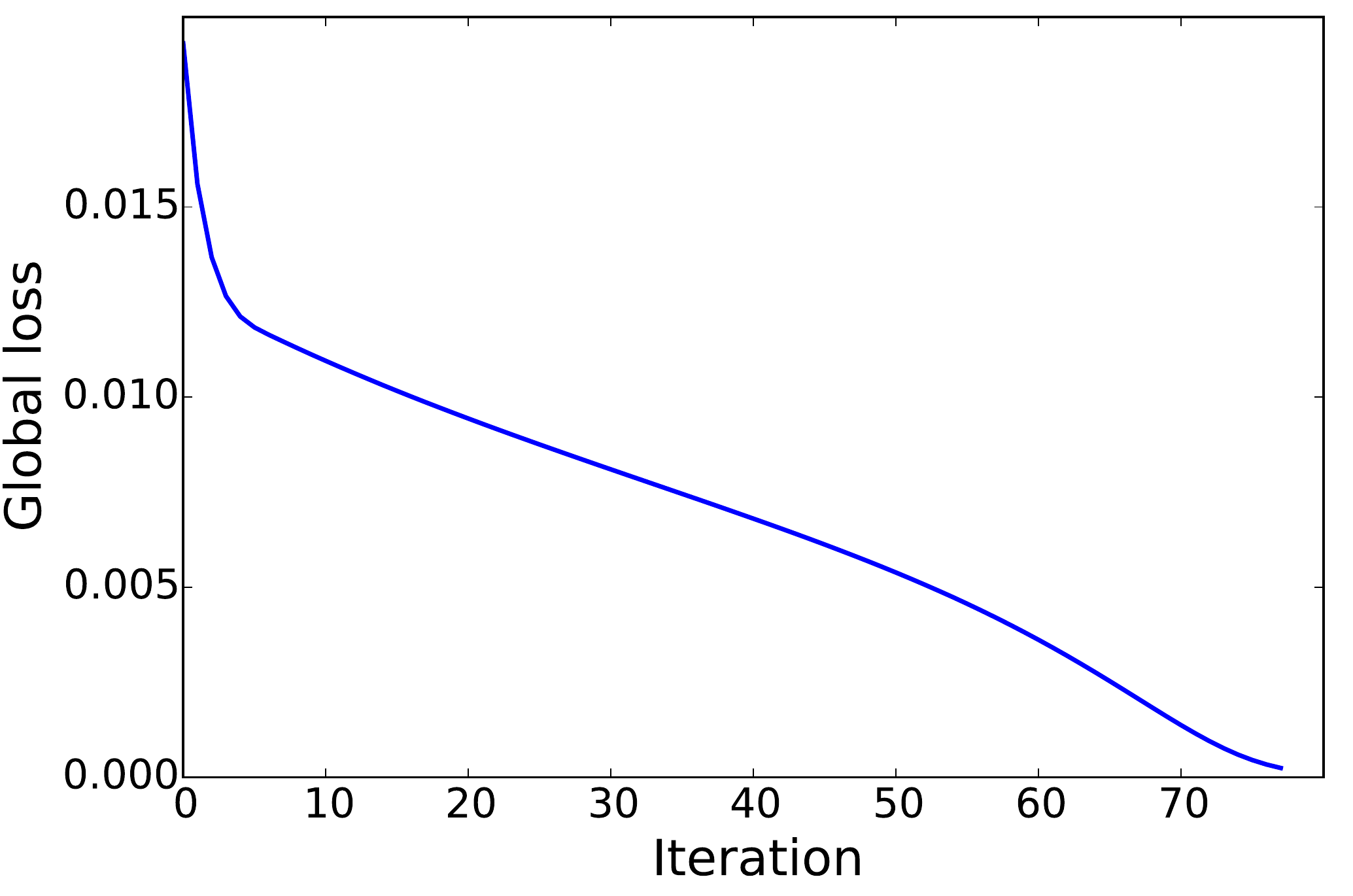}
  	\label{fig:synthetic_loss_logloss}
 	}
 	\caption{\protect\subref{fig:synthetic_pts} Artificial dataset for cross-entropy loss, with three domains (red, green and blue) and three categories (triangle, square, circle). \protect\subref{fig:synthetic_loss_logloss} Artificial dataset global loss versus iteration for cross-entropy loss. We empirically find that our solution converges to the global optimum of zero.}
 \end{figure*}

We next evaluate our algorithm on cross-entropy loss. Here we generate the two-dimensional dataset shown in 
Figure~\ref{fig:synthetic_pts}, which has three domains, denoted in the colors red, green, and blue, 
and three categories, denoted as squares, circles, and triangles. Each domain is generated according 
to a Gaussian mixture model, one mixture per category, with random means. The means of each corresponding 
category across domains are related according to a random fixed orthonormal transformation. 
Finally, the covariance of each mixture is diagonal and fixed across categories. We choose covariance 
magnitudes of 0.05, 0.05, and 0.3 for the red, green, and blue domains, respectively. 
We then train a logistic regression classifier per domain to produce score functions, $h_k$. 
Finally, as the true distribution is known for this artificial example, we forgo density estimation 
and use the Gaussian mixture density function to generate our $\scrD_k$s.

With this data source, we use our DC-programming solution to find the optimal mixing weights, $z$. 
Since only each convex sub-problem is guaranteed to converge, Figure~\ref{fig:synthetic_loss_logloss} reports this 
global loss vs iteration when initializing $z_0 = 1/p$, uniform weights. Here, the overall 
objective approaches 0.0, the known global minimum.% , in 77 iterations (2.5min). 
To verify the robustness of the solution, we have experimented with various initial conditions and found the solution 
converges to the global solution from each case.

%\begin{figure}[t]
%    \centering
%    \includegraphics[height=.4\linewidth]{figures/globalLoss_vs_iter_logloss}
%    \caption{Synthetic global loss versus iteration for $\log$-loss. We empirically find that our solution converges to the global optimum of zero.}
%    \label{fig:synthetic_loss_logloss}
%\end{figure}

%-------	Sentiment Analysis Table: Additional Pairs	--------%
\begin{table}[t]
\begin{center}
\caption{MSE on sentiment analysis dataset: target domain as various combinations of two domains.}
\scriptsize{
\begin{tabular}{p{0.4cm}P{0.8cm}P{0.8cm}P{0.8cm}P{0.8cm}P{0.8cm}P{0.8cm}}
\toprule
& \multicolumn{6}{c}{Test Data}\\
\cline{2-7}
& \texttt{KD}& \texttt{BE}& \texttt{KB}& \texttt{KE}& \texttt{DB}& \texttt{DE} \\
\midrule
\texttt{K}      &       {1.83$\pm$0.08} &       {1.99$\pm$0.10} &{1.87$\pm$0.08} &       {1.57$\pm$0.06} &       {2.25$\pm$0.08} &       {1.94$\pm$0.10}\\
\texttt{D}      &       {1.95$\pm$0.07} &       {2.11$\pm$0.07} &{2.12$\pm$0.07} &       {2.11$\pm$0.05} &       {1.95$\pm$0.06} &       {1.94$\pm$0.06}\\
\texttt{B}      &       {2.10$\pm$0.09} &       {1.99$\pm$0.08} &{1.96$\pm$0.07} &       {2.21$\pm$0.06} &       {1.87$\pm$0.07} &       {2.13$\pm$0.05}\\
\texttt{E}      &       {2.00$\pm$0.09} &       {1.95$\pm$0.07} &{2.05$\pm$0.05} &       {1.60$\pm$0.05} &       {2.36$\pm$0.07} &       {1.91$\pm$0.07}\\
\texttt{unif}   &       {1.73$\pm$0.06} &       {1.74$\pm$0.07} &{1.74$\pm$0.05} &       {1.62$\pm$0.04} &       {1.85$\pm$0.05} &       {1.73$\pm$0.06}\\
\texttt{KMM}    &       1.83$\pm$0.07   &       1.82$\pm$0.07   &1.78$\pm$0.12   &       1.65$\pm$0.10   &       1.97$\pm$0.13   &       1.88$\pm$0.08\\
\texttt{DW}     &       {\bf1.62$\pm$0.07}      &   {\bf1.61$\pm$0.08}       &      {\bf1.59$\pm$0.05}      &       {\bf1.47$\pm$0.04}      &       {\bf1.75$\pm$0.05}      &       {\bf1.64$\pm$0.05}\\
\bottomrule
\end{tabular}
}
\label{table:sa_more_pairs}	
\end{center}
\end{table}

%-------	Sentiment Analysis Table: Additional Mixture Lambdas	-----%	
\begin{table}[t]
\begin{center}
\caption{MSE on the sentiment analysis dataset: target domain as various mixture of four domains:
  $(\textbf{0.4}, 0.2, 0.2, 0.2)$, $(0.2, \textbf{0.4}, 0.2, 0.2)$, 
$(0.2, 0.2, \textbf{0.4}, 0.2)$, $(0.2, 0.2, 0.2, \textbf{0.4})$
of \texttt{K, D, B, E} respectively.}
\scriptsize{
  \begin{tabular}{l cccc}
\toprule
& \multicolumn{4}{c}{Test Data}\\
\cline{2-5}
& \texttt{\textbf{K}DBE}& \texttt{K\textbf{D}BE}& \texttt{KD\textbf{B}E}& \texttt{KDB\textbf{E}} \\
\midrule
\texttt{K}      &       {1.78$\pm$0.05} &       {1.94$\pm$0.10} &       {1.96$\pm$0.08} &       {1.84$\pm$0.07}  \\
\texttt{D}      &       {2.02$\pm$0.10} &       {1.98$\pm$0.10} &       {2.06$\pm$0.11} &       {2.05$\pm$0.09} \\
\texttt{B}      &       {2.01$\pm$0.12} &       {2.01$\pm$0.14} &       {1.94$\pm$0.14} &       {2.06$\pm$0.11} \\
\texttt{E}      &       {1.93$\pm$0.08} &       {2.04$\pm$0.10} &       {2.08$\pm$0.10} &       {1.89$\pm$0.08} \\
\texttt{unif}   &       {1.69$\pm$0.06} &       {1.74$\pm$0.07} &       {1.75$\pm$0.08} &       {1.70$\pm$0.06} \\
\texttt{KMM}    &       1.83$\pm$0.12   &       1.92$\pm$0.14   &       1.87$\pm$0.15   &       1.85$\pm$0.13	\\
\texttt{DW}     &       {\bf1.55$\pm$0.08}      &       {\bf1.62$\pm$0.08}      &       {\bf1.59$\pm$0.09}      &       {\bf1.56$\pm$0.08}      \\
\bottomrule
\end{tabular}
}
\label{table:sa_lamset1}	
\end{center}
\end{table}

%-------	Sentiment Analysis Table: Additional Mixture Lambdas	-----%	
\begin{table}[t]
\begin{center}
\caption{MSE on the sentiment analysis dataset: target domain as various mixture of four domains:
  $(\textbf{0.3}, \textbf{0.3}, 0.2, 0.2)$, $(\textbf{0.3}, 0.2, \textbf{0.3}, 0.2)$, 
  $(\textbf{0.3}, 0.2, 0.2, \textbf{0.3})$, $(0.2, \textbf{0.3}, \textbf{0.3}, 0.2)$,
  $(0.2, \textbf{0.3}, 0.2, \textbf{0.3})$, $(0.2, 0.2, \textbf{0.3}, \textbf{0.3})$
of \texttt{K, D, B, E} respectively.}
\scriptsize{
%\begin{tabular}{l cccccc}
\begin{tabular}{p{0.3cm}P{0.8cm}P{0.8cm}P{0.8cm}P{0.8cm}P{0.8cm}P{0.8cm}}
\toprule
& \multicolumn{6}{c}{Test Data}\\
\cline{2-7}
& \texttt{\textbf{KD}BE}& \texttt{\textbf{K}D\textbf{B}E}& \texttt{\textbf{K}DB\textbf{E}}& \texttt{K\textbf{DB}E} & \texttt{K\textbf{D}B\textbf{E}} & \texttt{KD\textbf{BE}}\\
\midrule
\texttt{K}      &       {1.86$\pm$0.10} &       {1.87$\pm$0.07} &       {1.79$\pm$0.08}      &       {1.96$\pm$0.10} &       {1.89$\pm$0.10} &       {1.89$\pm$0.08}\\
\texttt{D}      &       {2.01$\pm$0.13} &       {2.05$\pm$0.12} &       {2.04$\pm$0.12}      &       {2.03$\pm$0.12} &       {2.02$\pm$0.13} &       {2.06$\pm$0.12}\\
\texttt{B}      &       {2.01$\pm$0.15} &       {1.98$\pm$0.14} &       {2.05$\pm$0.13}      &       {1.98$\pm$0.15} &       {2.04$\pm$0.14} &       {2.01$\pm$0.13}\\
\texttt{E}      &       {2.00$\pm$0.10} &       {2.01$\pm$0.09} &       {1.91$\pm$0.08}      &       {2.08$\pm$0.10} &       {1.97$\pm$0.08} &       {1.99$\pm$0.08}\\
\texttt{unif}   &       {1.72$\pm$0.09} &       {1.72$\pm$0.08} &       {1.69$\pm$0.07}      &       {1.75$\pm$0.08} &       {1.72$\pm$0.08} &       {1.73$\pm$0.08}\\
\texttt{KMM}    &       1.85$\pm$0.16   &       1.86$\pm$0.14   &       1.85$\pm$0.15   &       1.90$\pm$0.14   &       1.89$\pm$0.16   &       1.90$\pm$0.14\\
\texttt{DW}     &       {\bf1.58$\pm$0.10}      &   {\bf1.57$\pm$0.10}       &       {\bf1.55$\pm$0.09}      &       {\bf1.61$\pm$0.10}      &       {\bf1.59$\pm$0.08}      &       {\bf1.58$\pm$0.09}\\
\bottomrule
\end{tabular}
}
\label{table:sa_lamset2}	
\end{center}
\end{table}

\subsection{Sentiment analysis task for squared loss}
We begin by detailing our density estimation method for the sentiment analysis experiment. We first used the same vocabulary
 defined for feature extraction to train a separate bigram statistical
 language model for each domain, using the OpenGrm library
 \citep{roark2012opengrm}.  Next, we randomly draw a sample set $S_k$
 of $10\mathord,000$ sentences from each bigram language model. We
 define $\h \sD_k$ to be the empirical distribution of $S_k$, which is
 a very close estimate of marginal distribution of the language model,
 thus it is also a good estimate of $\sD_k$. We approximate the label
 of a randomly generated sample $x_i$ by taking the average of the
 $h_k$s:
 $y_i = \sum_{\{k\colon x_i \in S_k\}} h_k(x_i) / \vert \{k\colon x_i
 \in S_k\}\vert$.  These randomly drawn samples were used to find the
 fixed-point $z$.

 Note that we only use estimates of the marginal distributions
 (language models) to find $z$ and do not use any labels.  We use the original product review text and rating labels for testing. Their
 densities $\h \sD_k$ were estimated by the bigram language models
 directly, therefore a close estimate of $\sD_k$.
 
Next we compare \texttt{DW} to accessible predictors on
various test mixture domains.
Table~\ref{table:sa_more_pairs} shows MSE on all combinations of two domains.
Table~\ref{table:sa_lamset1},~\ref{table:sa_lamset2} reports MSE on additional test  mixture domains. 
The first four target mixtures correspond to various orderings of $(0.4, 0.2, 0.2, 0.2)$. 
The next six target mixtures correspond to various orderings of $(0.3, 0.3, 0.2, 0.2)$. 
In column titles we bold the domain(s) with highest weight.

In all these experiments, our distribution-weighted predictor \texttt{DW}
outperforms all competing baselines: the source only baselines for each domain, 
\texttt{K, D, B, E}, a uniform weighted predictor \texttt{unif}, and \texttt{KMM}.

\subsection{Recognition tasks for cross-entropy loss}
Here, we describe our density estimation technique for the object recognition task.

To estimate the per domain densities, we first extract per image features using the in-domain ConvNet model,
and then estimate the marginal distribution $\msD_k(x)$ over the per domain collection of features,
using non-parametric kernel density estimation
with a Gaussian kernel and a cross-validated bandwidth parameter.
We use estimated marginals $\h \msD_k$ instead of estimated joint distributions $\h \sD_k$,
because when the conditional probabilities are the same across domains and when
$\eta\to 0$, $ h_z^\eta(x,y)$ converges to a normalized predictor
$ \widetilde h_z(x,y) = \sum_{k = 1}^p\frac{z_k \msD_k(x) }{
\sum_{j = 1}^p z_j \msD_j(x) } h_k(x, y)$. 
Thus in our experiments, we approximate $\h h_z^\eta(x,y)$ with $\widetilde{ \h h_z}(x,y)$ using our estimated marginal distributions $\h\msD_k(x)$.

\clearpage
\section{R\'enyi Divergence}
\label{app:renyi}
The R\'enyi Divergence measures the divergence between two 
distributions. 
The R\'enyi Divergence
is parameterized by $\alpha$ and denoted by $\sfD_\alpha$. The
$\alpha$-R\'enyi Divergence of two distributions $\sD$ and $\sD'$ is
defined by
\begin{equation}
  \sfD_\alpha(\sD \parallel \sD') = \frac{1}{\alpha - 1}
  \log \hspace{-0.3cm} \sum_{(x, y) \in \cX \times \cY} \hspace{-0.3cm}
  \sD(x, y)  \left[ \frac{\sD(x, y)}{\sD'(x, y)} \right]^{\alpha - 1} .
\end{equation}
It can be shown that the R\'enyi Divergence is always non-negative and
that for any $\alpha > 0$, $\sfD_\alpha(\sD \parallel \sD') = 0$ iff
$\sD = \sD'$, (see \citep{arndt}).  
We will denote by $\sfd_\alpha(\sD \parallel \sD')$ the exponential:
\begin{equation}
  \sfd_\alpha(\sD \parallel \sD') = e^{\sfD_\alpha(\sD \parallel \sD')} 
  =\Bigg[ \sum_{(x, y) \in \cX \times \cY} \frac{\sD^\alpha(x, y)}{\sD'^{\alpha - 1}(x, y)}
  \Bigg]^{\frac{1}{\alpha - 1}}. 
\end{equation}

R\'enyi divergence 
(and $\sfd_\alpha(\sD \parallel \sD')$) is 
nondecreasing as a function of $\alpha$, and 
\begin{equation}
    \sfd_\alpha(\sD \parallel \sD') \le \sfd_\infty(\sD \parallel \sD') = 
    \sup_{(x,y)\in\cX\times\cY} \left[\frac{\sD(x,y)}{\sD'(x,y)}\right].
\end{equation}

\end{document}